\newcommand{\1}{\mathbb{I}}
\newcommand{\bigcdot}{\bullet}
\theoremstyle{plain}
\newtheorem{theorem}{Theorem}
\newtheorem{lemma}[theorem]{Lemma}
\newtheorem{corollary}[theorem]{Corollary}
\theoremstyle{definition}
\newtheorem{assumption}{Assumption}
\theoremstyle{remark}
\title{Doubly Robust Alignment for Large Language Models}
\author{%
  Erhan~Xu$^*$ \\
  Department of Statistics \\
  LSE  \\
  London, UK \\
  \And
  Kai~Ye$^*$ \\
  Department of Statistics \\
  LSE  \\
  London, UK \\
  \And
  Hongyi~Zhou\thanks{Erhan~Xu, Kai~Ye, and Hongyi~Zhou contributed equally to this paper and are listed in alphabetical order.}\\
  Department of Mathematics\\
  Tsinghua University\\
  Beijing, China \\
  \And
  Luhan Zhu \\
  School of Design\\
  LCC, UAL\\
  London, UK
  \And
  Francesco Quinzan$^{\dagger}$ \\
  Department of Engineering Science\\
    University of Oxford\\
  Oxford, UK
  \And
  Chengchun Shi\thanks{Francesco Quinzan and Chengchun Shi are joint senior contributors and are listed in alphabetical order. Address for correspondence: Francesco Quinzan, Ph.D., \texttt{francesco.quinzan@eng.ox.ac.uk}; Chengchun Shi, Ph.D., \texttt{c.shi7@lse.ac.uk}.} \\
  Department of Statistics \\
  LSE  \\
  London, UK
}
\begin{document}

\maketitle

\begin{abstract}
\noindent This paper studies reinforcement learning from human feedback (RLHF) for aligning large language models with human preferences. While RLHF has demonstrated promising results, many algorithms are highly sensitive to misspecifications in the underlying preference model (e.g., the Bradley-Terry model), the reference policy, or the reward function, resulting in undesirable fine-tuning. To address model misspecification, we propose a doubly robust preference optimization algorithm that remains consistent when either the preference model or the reference policy is correctly specified (without requiring both). Our proposal demonstrates superior and more robust performance than state-of-the-art algorithms, both in theory and in practice. The code is available at \url{https://github.com/DRPO4LLM/DRPO4LLM}
\end{abstract}

\section{Introduction}
Recent advances in large language models (LLMs) have revolutionized various natural language processing tasks, ranging from text generation to human-AI conversation and more complex reasoning tasks \citep{brown2020language,touvron2023llama,guo2025deepseek}. These models are typically trained in two stages. In the pre-training stage, LLMs learn general linguistic patterns and commonsense knowledge from vast, unlabeled text data through autoregressive next-token prediction. However, pretrained models face a critical objective mismatch: while they are optimized for token prediction, real-world deployment requires alignment with complex human values such as helpfulness, honesty and harmlessness \citep{askell2021general}. This mismatch calls for an additional post-training stage, aiming at better aligning these pre-trained models with human preference. 

The paper studies reinforcement learning from human feedback (RLHF), a post-training paradigm that adapts pre-trained models through reinforcement learning \citep[RL,][]{sutton2018reinforcement}. The RLHF literature has rapidly expanded in recent years, where existing algorithms can be broadly categorized as reward-based or preference-based (Section \ref{sec:relatedworks} for a review). While demonstrating remarkable success in domains including robotics control, video games, and LLMs fine-tuning \citep[see e.g.,][]{christiano2017deep,ziegler2019fine,bai2022training,bakker2022fine,ouyang2022training}, they often suffer from various model misspecifications (see also Table \ref{tab:modemis} for a summary):

\begin{enumerate}[leftmargin=*]
    \item \textbf{Preference model misspecification}. Most reward-based algorithms rely on the Bradley-Terry \cite[BT,][]{bradley1952rank} preference model (see Equation \ref{eqn:BTmodel}). However, this model entails various unrealistic assumptions on human preference, including transitivity, context-independence and perfect relationality, which are likely violated based on empirical evidence \citep{May_1954_IntransitivityUA, Tversky1969IntransitivityOP, Gardener_1970_Mathematicalgames,Agranov_2015_StochasticCA,michaud2020understanding,milano2021ethical,lindner2022humans}. While some preference-based algorithms impose more general preference model (GPM) assumptions \citep[see e.g.,][]{zhang2024general}, their effectiveness still depends on correct model specification.
    \item \textbf{Reward model misspecification}. Under the BT model assumption, classical reward-based algorithms first estimate the reward function from human preference data and then apply RL algorithms such as the proximal policy optimization \cite[PPO,][]{schulman2017proximal} to derive the optimal policy. However, policy learning through RL is highly sensitive to the estimated reward. Misspecifying the reward can lead to reward hacking  \citep{skalse2022defining,laidlaw2024correlated} and misguide policy learning \citep{kaufmann2023survey,zheng2023secrets,chen2024accuracy}. 
    \item \textbf{Reference policy misspecification}. To alleviate misspecification of the reward, recent algorithms based on direct preference optimization \citep[DPO,][]{rafailov2023direct} propose to express the reward in closed form using the reference policy for policy learning. However, these algorithms are sensitive to the specification of reference policy \citep{liu2024understanding,gorbatovski2024learn,xu2024bpo}. 
\end{enumerate}
Drawing from doubly robust estimation methods in econometrics and RL (see Section \ref{sec:relatedworks} for a literature review), we introduce a novel RLHF algorithm that is robust to model misspecification and statistically efficient; see Figure \ref{fig:pipeline} for a visualization of our algorithm. Our major contributions are summarized as follows: 
\begin{itemize}[leftmargin=*]
    \item We propose a robust and efficient estimator for preference evaluation, i.e., evaluating the probability of a target policy being preferred over the reference policy. The proposed preference estimator achieves two desirable properties: (i) \ul{\textit{double robustness}} (Corollary \ref{thm:doubly-robust property}) -- it converges to the true preference probability when either the preference model or the reference policy is correctly specified, and (ii) \ul{\textit{semi-parametric efficiency}} (Corollary \ref{thm:semi-parametric efficiency}) -- it attains the smallest mean squared error (MSE) among all regular and asymptotically linear estimators \citep{newey1990semiparametric,Tsiatis_2006_Semiparametric}.
    \item Leveraging this preference estimator, we further develop a preference optimization algorithm for LLM fine-tuning. The proposed algorithm maintains \ul{\textit{double robustness}} (Corollary \eqref{coro:doublerobustpo}) and remains consistent even when the BT model assumption is violated (Theorem \ref{thm:totalpreference}). Meanwhile, when the BT model assumption holds, its suboptimality gap is 
    likely \ul{\textit{smaller}} than that of PPO- or DPO-based algorithms (Theorem \ref{thm:suboptimal}). 
\end{itemize}
\begin{figure}
    \centering    \includegraphics[width=1\linewidth]{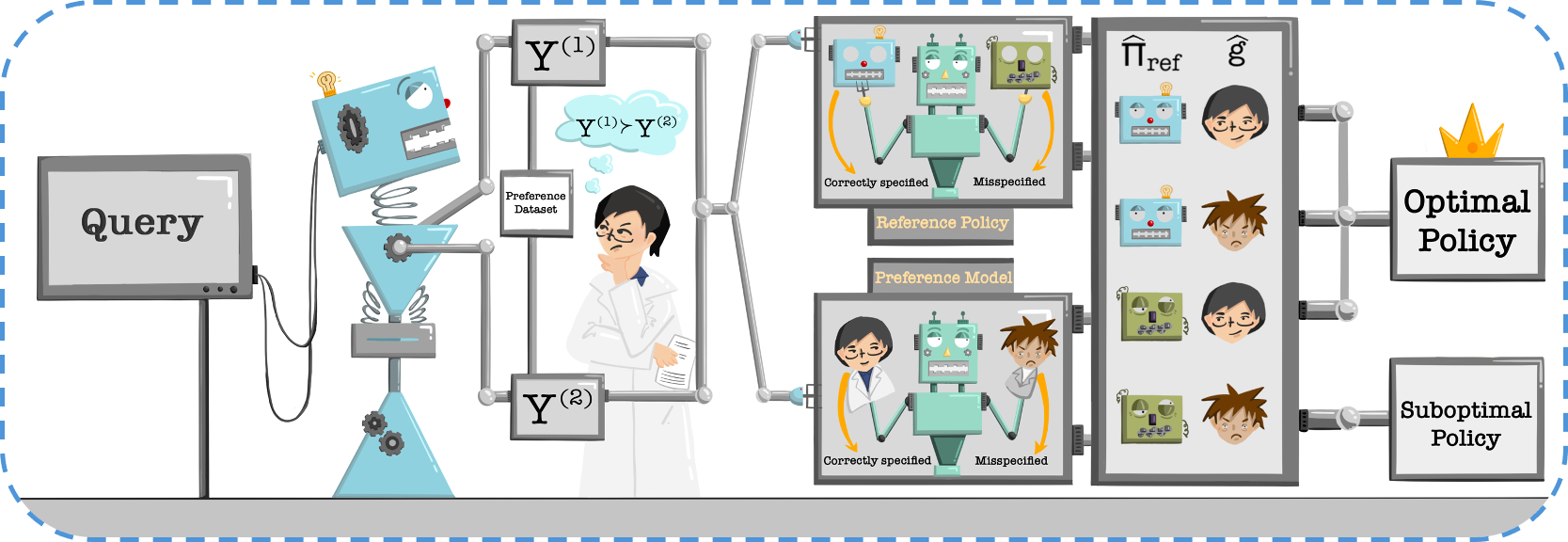}
    \vspace*{-5pt}
    \caption{A visualization of our proposed preference optimization algorithm. $\widehat{\pi}_{\textrm{ref}}$ denotes the specified reference policy whereas $\widehat{g}$ denotes the specified preference model. Our proposal is doubly robust in that it requires correct specification of either the reference policy, or the preference model.}
    \label{fig:pipeline}
\end{figure}

\section{Related Works}\label{sec:relatedworks}

Our work is closely related to reward- and preference-based RLHF algorithms, as well as doubly robust (DR) methods. We discuss these related works below.  

\textbf{Reward-based RLHF.} Reward-based algorithms assume the existence of a latent utility or reward function that determines human preferences, estimate the reward function from the data and apply RL for policy learning. Recent research has focused on addressing practical challenges such as reward hacking and model-collapse. These issues arise due to PPO's sensitivity to reward specification, gradient clipping thresholds, and the tuning parameter controlling Kullback–Leibler (KL)-divergence regularization \citep{engstrom2019implementation,zheng2023secrets,xiao2024algorithmic}. Existing approaches to these challenges fall into three categories: (i) The first category focuses on improving the reward learning algorithm to obtain more accurate reward functions \citep{li2023remax,chan2024dense,gao2024designing,liu2024dual,fu2025reward,xiao2025connection,ye2025robust}. (ii) The second category develops better policy learning algorithms using the estimated reward function \citep{wu2024pairwise, zhang2024cppo, shao2024deepseekmath, hu2025reinforce++, liu2025understanding, yu2025dapo,yuan2025vapo}. (iii) The third category is DPO-based, which bypasses reward learning entirely and directly optimizes policies under the BT model assumption  \citep{zhao2022calibrating,wang2023beyond,ethayarajh2024kto,song2024preference,tang2024generalized}. Recent studies have developed robust variants of DPO to handle pairwise noise where preference labels  in the training data may be flipped \citep{chowdhury2024provably, wu2024towards, liang2024ropo}.

Unlike many of these algorithms, our proposal does not rely on the BT model assumption,
and is more robust to the misspecification of reward or reference policy when the BT model
holds.


\textbf{Preference-based RLHF.} Preference-based algorithms do not assume the existence of a latent reward function at all; instead, they search the optimal policy that maximizes the alignment with human preferences \citep[see e.g.,][]{azar2024general}. 
In particular, there is a growing line of research that adopts the Nash learning from human feedback \citep[NLHF,][]{munos2023nash} framework, which formulates the alignment problem as a two-player constant-sum game and solves for policies that achieve the Nash equilibrium \citep{calandriello2024human, rosset2024direct, swamy2024minimaximalist, wu2024self, ye2024online, zhang2024iterative, liu2025statistical}. Beyond NLHF, \cite{wang2023aligning} develops a Bayesian approach for alignment, whereas \cite{hong2024energy} and \cite{zhang2024general} propose energy-based and general preference models to relax the BT model assumption. 

Our proposal belongs to this class of preference-based methods. In particular, the proposed algorithm is most closely related to the identity preference optimization (IPO) algorithm proposed by \citet{azar2024general}, as both maximizing the same objective function in the population level (see Section \ref{sec:drpo} for the objective). However, unlike IPO, our proposed method is robust to misspecifications of the reference policy. Similarly, compared to \cite{zhang2024general}, the proposed algorithm is more robust to the misspecification of the preference model. Finally, our work differs from NLHF in its primary focus: we study robust and \textit{statistically efficient} preference estimation from data, rather than developing \textit{computationally efficient} algorithms to solve the Nash equilibrium.

\textbf{Doubly robust methods.} DR has been extensively studied in statistics, econometrics and machine learning. These methods originate from the missing data and causal inference literature \citep[see e.g.,][]{robins1994estimation,scharfstein1999adjusting}. 
To illustrate these methods, consider the fundamental causal inference problem of estimating the average treatment effect (ATE) -- the difference in the mean outcome between a newly-developed treatment policy and a baseline policy for a given patient population. DR first estimates two models from the data: (i) a propensity score model (similar to the reference policy in LLMs) that characterizes the treatment assignment mechanism and (ii) an outcome regression model (similar to the reward function) that specifies the conditional mean function of a patient's outcome. 
It then employs both models to construct the ATE estimator, whose consistency requires only one of the models to be correct. Furthermore, when both models are correct, the resulting estimator is semiparametrically efficient \citep{bang2005doubly}. 
These methods' favorable statistical properties have led to extensive follow-up research \citep[see e.g.,][]{tan2010bounded,tsiatis2011improved,imai2014covariate,vermeulen2015bias,kennedy2017non,robins2017minimax,wager2018estimation,wang2018bounded,kunzel2019metalearners,oprescu2019orthogonal,shi2019adapting,fulcher2020robust,farrell2021deep,nie2021quasi,li2022stabledr,cui2023estimating,han2023multiply,kennedy2023towards,li2023improving,cui2024semiparametric,wang2024debiased,wang2024multiply,zhubalancing}. A seminal extension appears in   \citet{chernozhukov2018double}, which proposes to learn both the propensity score and outcome regression models using machine learning methods to deal with complex data structures with high-dimensional covariates, texts or images.

Beyond treatment effect estimation in causal inference, doubly robust methods have been widely applied to a broad range of other problems, including the estimation and evaluation of optimal (dynamic) treatment regimes \citep{robins2004optimal,zhang2012robust,zhang2013robust,schulte2015q,luedtke2016statistical,fan2017concordance,jiang2017estimation,song2017semiparametric,shi2018high,Shi_2020_Breaking,shi2020sparse,shi2021concordance}, conditional independence testing \citep{zhang2019measuring,shah2020hardness,shi2021double,quinzan2023drcfs,shi2024testing,zhang2024doubly}, 
offline policy learning \citep{Dudik_2014_doublyrobust,kallus2020statistically,uehara2020off,liao2022batch,shi2024statistically,shi2024value} and off-policy evaluation \citep[OPE,][]{jiang2016doubly,thomas2016data,farajtabar2018more,kallus2018policy,bibaut2019more, tang2020doubly,kallus2020double,su2020doubly,uehara_2020_minimax,cai2021deep,shi2021deeply,shi2022minimax,xu2022quantile,li2023optimal,shi2023multiagent,xie2023semiparametrically,xu2023instrumental,cao2024orthogonalized,li2024combining,shen2024doubly,shi2024off,wei2025characterization}. 

However, none of the aforementioned works considers the application of fine-tuning LLMs -- a gap we aim to bridge by connecting these two vibrant research areas.

\begin{table}[t]
\centering
\caption{Robustness of different algorithms to model misspecification. Our algorithm is denoted by DRPO, short for doubly robust preference optimization.}\label{tab:modemis}
\small
\begin{tabular}{l l l c c c}
\toprule
\multicolumn{1}{c}{}&\multicolumn{2}{r}{Robust to misspecified:} & preference model & reward model & reference policy \\  
\cmidrule(lr){1-6}
\multirow{5}{*}{RLHF}  
& \multirow{2}{*}{Reward-based} & PPO-based & {{\color{red}\ding{55}}} & {{\color{red}\ding{55}}} & {\color{ForestGreen}\ding{51}} \\
&                          & DPO-based & {\color{red}\ding{55}} & {\color{ForestGreen}\ding{51}} & {\color{red}\ding{55}} \\
\cmidrule(lr){2-6}
& \multirow{3}{*}{Preference-based} & IPO \cite{azar2024general} & {\color{ForestGreen}\ding{51}} & - & {\color{red}\ding{55}} \\
&                          & GPM \cite{zhang2024general} & {\color{red}\ding{55}} & - & {\color{ForestGreen}\ding{51}} \\
&                          & \textbf{DRPO} & {\color{ForestGreen}\ding{51}} & {\color{ForestGreen}\ding{51}} & {\color{ForestGreen}\ding{51}} \\
\bottomrule
\end{tabular}
\end{table}

\section{RLHF Preliminaries: Data, Modeling and Baseline Algorithms}\label{sec: prelim}
\textbf{Data generating process}. Assume we are given a dataset $\mathcal{D}$, consisting of $n$ i.i.d. tuples of the form $(X, Y^{(1)}, Y^{(2)}, Z)$. Each of these tuples is generated as follows: Given a prompt $X$, two independent responses $(Y^{(1)}, Y^{(2)})$ are generated under a reference policy $\pi_{\textrm{ref}}$ such that $Y^{(1)}, Y^{(2)} \sim \pi_{\textrm{ref}}(\bullet|X)$. These data $(X, Y^{(1)}, Y^{(2)})$ are then shown to a human expert, who provides a binary preference $Z = \1(Y^{(1)} \succ Y^{(2)})$ where $Y^{(1)} \succ Y^{(2)}$ indicates that the first response is preferred, and $\1(\bigcdot)$ denotes the indicator function. Additionally, let $g^*$ denote the preference function such that $g^*(X,Y^{(1)},Y^{(2)})=\mathbb{P}(Y^{(1)}\succ Y^{(2)}|X)$ determines the probability of $Y^{(1)}$ being favored over $Y^{(2)}$ conditional on $X$.

We remark that the reference policy $\pi_{\textrm{ref}}$ is not always known. For instance, the responses might be generated by an LLM different from the target model that we wish to fine-tune \cite{bai2022training}. Furthermore, the responses might be produced by a heterogeneous set of models rather than a single model \citep{stiennon2020learning,zhong2024provable,aminian2025theoretical}. 

\textbf{BT model}. As commented in Section \ref{sec:relatedworks}, most existing reward-based RLHF algorithms impose the BT model assumption, which requires the preference function $g^*$ to take the following form, 
\begin{equation}\label{eqn:BTmodel}
    g^{*}(x,y^{(1)},y^{(2)}) = \sigma(r^*(y^{(1)},x) - r^*(y^{(2)},x)), 
\end{equation}
where $r^*$ denotes some underlying reward function that measures how well a response answers a given prompt, and $\sigma$ denotes the sigmoid function. As commented in the introduction, this assumption is likely violated due to the inherent intransitivity, inconsistency and stochasticity in human preference. 

Assuming \eqref{eqn:BTmodel} holds, the goal is to learn an optimal policy $\pi^*$ that maximizes the expected reward 
\begin{eqnarray}\label{eqn:expectedreward}
    J(\pi)=\mathbb{E} [\mathbb{E}_{y\sim \pi(\bullet|X)} r^*(y,X)],
\end{eqnarray}
among all policies $\pi$. Here, the outer expectation is taken with respect to the prompt distribution, whereas the inner expectation is taken with respect to the response generated by a given policy $\pi$. 

We next introduce two types of baseline algorithms -- PPO-based and DPO-based -- for learning $\pi^*$. Both approaches operate under Assumption \eqref{eqn:BTmodel}.

\textbf{PPO-based approaches.} PPO-based algorithms proceed in two steps. In the first step, they compute an estimated reward function $\widehat{r}$ using maximum likelihood estimation or empirical risk minimization. In the second step, they learn $\pi^*$ by maximizing
\begin{equation}\label{eqn:ppo}
\mathbb{E}_{X \sim \mathcal{D},\, y \sim \pi(\bullet \mid X)} \left[ \widehat{r}(y, X) \right] 
- \beta \, {D}_{\mathrm{KL}}\left[ \pi(y \mid X) \,\|\, \pi_{\mathrm{ref}}(y \mid X) \right],
\end{equation}
over $\pi\in \Pi$ (e.g., a transformer-based policy class), where the expectation is taken over prompts $X$ from the empirical data distribution and responses $y$ from a target policy $\pi$, $D_{\mathrm{KL}}$ denotes the KL divergence measure between the target and reference policies, and the tuning parameter $\beta>0$ controls the degree to which $\pi$ is allowed to deviate from $\pi_{\textrm{ref}}$. The KL regularization term in \eqref{eqn:ppo} encourages the learned policy to stay close to $\pi_{\textrm{ref}}$, in order to mitigate over-fitting and prevent the learned policy from collapsing to a narrow set of high-reward responses \citep{zheng2023secrets}. 

\textbf{DPO-based approaches}. DPO-based algorithms are motivated by the fact that the argmax to \eqref{eqn:ppo} (denoted by $\widehat{\pi}$) can be represented in closed-form using the estimated reward $\widehat{r}$. This in turn yields the following closed-form expression for $\widehat{r}$, 
\begin{eqnarray}\label{eqn:rewardpolicy2}
    \widehat{r}(y,x)=\beta \log \left(\frac{\widehat{\pi}(y|x)}{\pi_{\textrm{ref}}(y|x)}\right)-C(x),
\end{eqnarray}
for some response-independent function $C(x)$ that will cancel out in pairwise comparisons. As such, instead of solving $\widehat{\pi}$ in two steps, DPO-based approaches directly parameterize the reward via Equation \eqref{eqn:rewardpolicy2} and compute $\widehat{\pi}$ in a single step -- for example, by maximizing the likelihood of the human preference data under the BT model.

To conclude this section, we note that, as shown in Equation \eqref{eqn:ppo}, the optimal policy computed by PPO can be highly sensitive to the estimated reward function $\widehat{r}$. While DPO-based approaches eliminate this dependence, Equation \eqref{eqn:rewardpolicy2} reveals that their optimization relies on the specification of the reference policy $\pi_{\textrm{ref}}$. Due to these sensitivities, even under the idealized setting where the BT model holds, both PPO- and DPO-based algorithms can underperform our proposed algorithm, which is inherently more robust to misspecification in both $\widehat{r}$ and $\pi_{\textrm{ref}}$. We provide theoretical justification in Section \ref{sec:theoretical-results} and empirical validation in Section \ref{sec:exp}.

\section{Double Robust Preference Evaluation and Optimization}\label{sec:drpo}
This section introduces the proposed doubly robust approach; see Figure~\ref{fig:pipeline} for a visualization. Different from these reward-based algorithms discussed in Section \ref{sec: prelim}, we adopt a preference-based approach that searches the optimal policy by maximizing its total preference. Specifically, given a target policy $\pi$, its \textit{total preference} over the reference policy \citep{azar2024general} is defined by 
\begin{equation*}
    p^*(\pi):=\mathbb{P} (\pi \succ \pi_{\textrm{ref}})=\mathbb{E} [\mathbb{E}_{y\sim \pi(\bullet|X), y'\sim \pi_{\textrm{ref}}(\bullet|X)} g^*(X,y,y')],
\end{equation*}
where we recall that $g^*$ denotes the preference function $\mathbb{P}(y>y'|X)$, and the outer expectation is taken with respect to the prompt distribution. As both $Y^{(1)}$ and $Y^{(2)}$ are generated under $\pi_{\textrm{ref}}$, we have
\begin{equation}\label{eqn:averagetotalpreference}
    p^*(\pi)=\frac{1}{2}\sum_{a=1}^2\mathbb{E} [\mathbb{E}_{y\sim \pi(\bullet|X)} g^*(X,y,Y^{(a)})].
\end{equation}
For preference evaluation, our goal is to accurately estimate $p^*(\pi)$ for a given target policy $\pi$ from the dataset $\mathcal{D}$. In the following, we first introduce two baseline estimators: a direct method (DM) estimator and an importance sampling (IS) estimator, where the names are borrowed from the OPE literature \citep[see e.g.,][]{uehara2022review}. We next introduce our proposed DR estimator, which combines both DM and IS for efficient and robust preference evaluation. 

\textbf{DM estimator}. The direct method estimator is motivated by \eqref{eqn:averagetotalpreference}. It proceeds by first estimating $g^*$ and then plugging the estimated $g^*$ (denoted by $\widehat{g}$) into \eqref{eqn:averagetotalpreference} to construct the estimator, 
\begin{equation}\label{eqn:direct}
    \widehat{p}_{\textrm{DM}}(\pi)=\frac{1}{2}\mathbb{E}_{X\sim \mathcal{D}, y\sim \pi(\bullet|X)} [\widehat{g}(X,y,Y^{(1)})+\widehat{g}(X,y,Y^{(2)})],
\end{equation}
where $X$ is drawn from the empirical data distribution, $y$ is drawn from $\pi$ and the expectation can be approximated using Monte Carlo sampling. 

When an external preference model is available, it can be used directly as $\widehat{g}$, as in \cite{munos2023nash}. Otherwise, $g^*$ can be estimated from the data $\mathcal{D}$. For instance, under the BT model assumption, one can estimate the reward function $r^*$ and plug the estimator into \eqref{eqn:BTmodel} to derive $\widehat{g}$. Alternatively, one can employ more general preference models that do not rely on the BT model. 

\textbf{IS estimator}. The second baseline estimator is the IS estimator, which is motivated by the following lemma that expresses $p^*(\pi)$ using the IS ratio $w(y,x)=\pi(y|x)/\pi_{\textrm{ref}}(y|x)$.
\begin{lemma}\label{lemma1}
    Assume $w(y,x)<\infty$ for any $x$, $y$. Then $p^*(\pi)=\frac{1}{2}\mathbb{E} [w(Y^{(1)},X)Z+ w(Y^{(2)},X)(1-Z)]$.
\end{lemma}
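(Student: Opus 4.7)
The plan is to prove the identity by iterated conditioning, first replacing $Z$ with the conditional preference probability $g^*$, and then applying a standard importance-sampling change of measure to turn one of the $\pi_{\textrm{ref}}$-samples into a $\pi$-sample. I will show that each of the two summands $\mathbb{E}[w(Y^{(1)},X)Z]$ and $\mathbb{E}[w(Y^{(2)},X)(1-Z)]$ separately equals $p^*(\pi)$; the factor $\tfrac12$ then arises from their average.

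First, I would condition on the triple $(X,Y^{(1)},Y^{(2)})$. By the definition of $g^*$,
\begin{equation*}
\mathbb{E}\bigl[Z\,\bigm|\,X,Y^{(1)},Y^{(2)}\bigr]=g^*(X,Y^{(1)},Y^{(2)}),\qquad \mathbb{E}\bigl[1-Z\,\bigm|\,X,Y^{(1)},Y^{(2)}\bigr]=g^*(X,Y^{(2)},Y^{(1)}),
\end{equation*}
where the second identity uses the symmetry $g^*(x,a,b)+g^*(x,b,a)=1$ that follows from $Z=\mathbb{I}(Y^{(1)}\succ Y^{(2)})$ being Bernoulli with probability $g^*(X,Y^{(1)},Y^{(2)})$ (assuming the standard no-tie convention). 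By the tower property,
\begin{equation*}
\mathbb{E}\bigl[w(Y^{(1)},X)Z\bigr]=\mathbb{E}\bigl[w(Y^{(1)},X)\,g^*(X,Y^{(1)},Y^{(2)})\bigr],
\end{equation*}
and analogously for the second term with $Y^{(2)}$ in place of $Y^{(1)}$ and the arguments of $g^*$ swapped.

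Next, the core step is the change-of-measure. Since $Y^{(1)}\sim\pi_{\textrm{ref}}(\cdot\mid X)$ and $w(y,x)=\pi(y\mid x)/\pi_{\textrm{ref}}(y\mid x)$ is finite everywhere, for each fixed $(X,Y^{(2)})$ we have
\begin{equation*}
\mathbb{E}_{Y^{(1)}\sim\pi_{\textrm{ref}}(\cdot\mid X)}\!\bigl[w(Y^{(1)},X)\,g^*(X,Y^{(1)},Y^{(2)})\bigr]=\mathbb{E}_{y\sim\pi(\cdot\mid X)}\!\bigl[g^*(X,y,Y^{(2)})\bigr].
\end{equation*}
Taking outer expectations over $(X,Y^{(2)})$ and invoking \eqref{eqn:averagetotalpreference} gives $\mathbb{E}[w(Y^{(1)},X)Z]=p^*(\pi)$. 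Applying exactly the same argument to $\mathbb{E}[w(Y^{(2)},X)\,g^*(X,Y^{(2)},Y^{(1)})]$ yields $\mathbb{E}[w(Y^{(2)},X)(1-Z)]=p^*(\pi)$ as well. Averaging the two identities produces the claim.

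I do not anticipate a serious obstacle: the only step that requires some care is justifying the symmetry $g^*(x,a,b)+g^*(x,b,a)=1$ (equivalently, $\mathbb{E}[1-Z\mid\cdots]=g^*(X,Y^{(2)},Y^{(1)})$), which relies on treating ties as having zero probability, as is standard in this literature. Once that is granted, the rest is a mechanical application of Fubini and importance sampling made valid by the stated boundedness of $w$.
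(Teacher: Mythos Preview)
Your proposal is correct and follows essentially the same route as the paper's proof: condition on $(X,Y^{(1)},Y^{(2)})$ to replace $Z$ (resp.\ $1-Z$) by $g^*$, then apply the importance-sampling change of measure to convert the $\pi_{\textrm{ref}}$-expectation into a $\pi$-expectation. The only cosmetic difference is that you emphasize each summand individually equals $p^*(\pi)$ before averaging, whereas the paper simply computes both and takes their mean.
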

The proof of Lemma \ref{lemma1} is straightforward. It follows directly from the symmetry of pairwise comparisons where the preference can be equivalently expressed using either $g^*(X,y,y')$ or $1-g^*(X,y',y)$, and an application of the change-of-measure theorem (see Appendix {\ref{sec:proof-lemma-1}). 

Based on this identity, we define the following IS estimator:
\begin{eqnarray}\label{eqn:IS}
    \widehat{p}_{\textrm{IS}}(\pi)=\frac{1}{2}\mathbb{E}_{(X,Y^{(1)},Y^{(2)},Z)\sim \mathcal{D}}\Big[\frac{\pi(Y^{(1)}|X)}{\widehat{\pi}_{\textrm{ref}}(Y^{(1)}|X)}Z+\frac{\pi(Y^{(2)}|X)}{\widehat{\pi}_{\textrm{ref}}(Y^{(2)}|X)}(1-Z)\Big],
\end{eqnarray}
where $\widehat{\pi}_{\textrm{ref}}$ denotes an estimated reference policy. If $\pi_{\textrm{ref}}$ is known, we can directly use the oracle reference policy. Otherwise, for some external datasets \citep[e.g.,][]{bai2022training}, well-trained reference models are available and can be used as $\widehat{\pi}_{\textrm{ref}}$. Finally, when no such external model is available and $\pi_{\textrm{ref}}$ is unknown, we estimate it from the observed data tuples $(X,Y^{(1)},Y^{(2)})$ using supervised fine-tuning (SFT).


\textbf{DR estimator}. A closer look at Equations \eqref{eqn:direct} and \eqref{eqn:IS} reveals that the DM and IS estimators' consistencies depend crucially on the correct specification of the preference function and the reference policy. We next introduce our proposed DR estimator, which is more robust to misspecifications in these models. It relies on the following estimating function $\psi(X,Y^{(1)}, Y^{(2)}, Z; \pi, \widehat{\pi}_{\textrm{ref}},\widehat{g})$, defined as
\begin{eqnarray}\label{eqn:estfun}
\begin{split}
    \frac{1}{2}\sum_{a=1}^2\mathbb{E}_{y\sim \pi(\bullet|X)} [\widehat{g}(X,y,Y^{(a)})] 
    +\frac{1}{2}\sum_{a=1}^2(-1)^{a-1}\frac{\pi(Y^{(a)}|X)}{\widehat{\pi}_\text{ref}(Y^{(a)}|X)} [Z - \widehat{g}(X, Y^{(1)}, Y^{(2)})].
\end{split}
\end{eqnarray}
By definition, this estimating function contains two terms: (i) the first term is essentially the estimating function of the DM estimator in \eqref{eqn:direct}, and (ii) the second term is an augmentation term, which is similar to IS in \eqref{eqn:IS}, but with the observed preference $Z$ replaced by its residual $Z-\widehat{g}(X,Y^{(1)},Y^{(2)})$. The purpose of introducing the additional augmentation term is to correct for the bias introduced by misspecification of the preference model in the DM estimator. 
This leads to our DR estimator,  
\begin{eqnarray}\label{eqn:DR}
    \widehat{p}_{\textrm{DR}}(\pi)=\mathbb{E}_{(X,Y^{(1)},Y^{(2)},Z)\sim \mathcal{D}} \psi(X,Y^{(1)}, Y^{(2)}, Z; \pi, \widehat{\pi}_{\textrm{ref}},\widehat{g}).
\end{eqnarray}
Similar to the DR estimator in the bandit setting \citep{Dudik_2014_doublyrobust}, \eqref{eqn:DR} is reduced to the IS estimator when setting $\widehat{g}$ to zero, and the DM estimator when setting the IS ratio $\pi/\widehat{\pi}_{\textrm{ref}}$ to zero. However, as shown in \eqref{eqn:estfun}, a key different from those bandit estimators is that in our pairwise comparison setting, each data tuple is used twice -- as $(X, Y^{(1)}, Y^{(2)}, Z)$ and $(X, Y^{(2)}, Y^{(1)}, 1-Z)$ -- in constructing the estimating function. This effectively reduces the variance of the resulting estimator. As a result, we will formally show in Section \ref{sec:theoretical-results} that our DR estimator is semi-parametrically efficient. Additionally, we will establish the consistency of \eqref{eqn:DR} when either $\widehat{g}$ or $\widehat{\pi}_{\textrm{ref}}$ is correctly specified.

\textbf{Preference optimization}. For preference optimization, our goal is to identify the optimal policy that maximizes the average total preference $p^*(\pi)$. Under the BT model assumption, it is immediate to see that the argmax is equivalent to $\pi^*$ defined in \eqref{eqn:expectedreward}. 
Given the proposed DR estimator, we estimate the optimal policy by solving
\begin{equation}\label{eqn:hatpi}
    \widehat{\pi}=\arg\max_{\pi\in \Pi} \Big\{\widehat{p}_{\textrm{DR}}(\pi)-\beta \mathbb{E}_{X\sim \mathcal{D}} {D}_{\mathrm{KL}}[ \pi(\bullet \mid X) \,\|\, \widehat{\pi}_{\mathrm{ref}}(\bullet \mid X) ]\Big\}.
\end{equation}
We refer to \eqref{eqn:hatpi} as DRPO, short for doubly robust preference optimization. 
Theoretically, we will show in Section \ref{sec:theoretical-results} that our estimated policy $\widehat{\pi}$ achieves a smaller suboptimality gap bound than PPO- and DPO-based algorithms when the BT assumption holds. Practically, we implement three refinements to stabilize the training: (i) clipping the IS ratio to avoid extremely large IS ratio; (ii) designing a pseudo objective function to enable Monte Carlo sampling from the target policy during optimization; (iii) adopting the KL divergence measure from the group relative policy optimization \citep{shao2024deepseekmath} for variance reduction. Details are relegated to Appendix~\ref{sec: Algorithm} to save space.

\section{Theoretical Analysis}\label{sec:theoretical-results}
\begin{figure}[t]
    \centering 
    \includegraphics[width=1\linewidth]{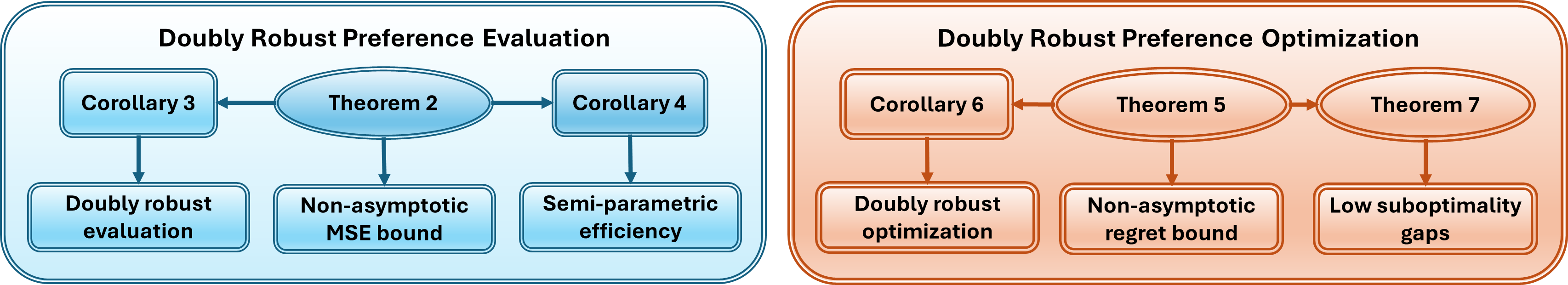}
    \vspace*{-5pt}
    \caption{A visualization of our theoretical findings.}
    \label{fig:roadmap}
    
\end{figure}
We begin with a summary of our theories; Figure \ref{fig:roadmap} outlines the roadmap. Our theories are concerned with (i) the MSE of our preference evaluation estimator $\widehat{p}_{\textrm{DR}}(\pi)$ (see \eqref{eqn:DR}), and (ii) the performance gap bounds of $\widehat{\pi}$ (see \eqref{eqn:hatpi}) computed by the proposed preference optimization algorithm. 
Specifically, Theorem \ref{thm:mse} provides a finite sample upper bound for the MSE of $\widehat{p}_{\textrm{DR}}(\pi)$, which in turn yields its double robustness (Corollary \ref{thm:doubly-robust property}) and semi-parametric efficiency (Corollary \ref{thm:semi-parametric efficiency}). Meanwhile, Theorem \ref{thm:totalpreference} upper bounds the difference in total preference between the optimal in-class policy and $\widehat{\pi}$, without assuming the BT model holds. It reveals the double robustness property of our preference optimization algorithm (Corollary \ref{coro:doublerobustpo}). When the BT model holds, Theorem \ref{thm:suboptimal} further upper bounds the suboptimal gap of $\widehat{\pi}$, demonstrating that it general achieves smaller gaps than PPO- and DPO-based algorithms. 

We next introduce some technical conditions. 
\begin{assumption}[Coverage]\label{assump:Coverage}
    $\pi/\pi_{\textrm{ref}}$ and $\pi/\widehat{\pi}_{\textrm{ref}}$ are upper bounded by $\epsilon^{-1}$ for some constant $\epsilon>0$. 
\end{assumption}
\begin{assumption}[Boundedness]\label{assump:Boundedness}
    When the BT model holds, both the oracle reward function $r^*$ and its estimator are bounded functions. 
\end{assumption}
\begin{assumption}[Realizability]\label{assump:Realizability}
    When the BT model holds, $\pi^*$ that maximizes the expected reward (see \eqref{eqn:expectedreward}) belongs to the parameterized policy class $\Pi$ in \eqref{eqn:hatpi}.
\end{assumption}
\begin{assumption}[Model complexity]\label{assump:VC}
    $\Pi$ belongs to the Vapnik–Chervonenkis (VC) type class \citep[Definition 2.1]{chernozhukov2014gaussian} with a finite VC index $v>0$. 
\end{assumption}
We remark that similar coverage, boundedness and realizability assumptions are commonly imposed in the OPE and RL literature \citep[see e.g.,][]{chen2019information,fan2020theoretical,uehara2022review}. The VC-class condition is also frequently assumed in statistics and machine learning \citep[see e.g.,][]{van1996weak,shalev2014understanding}. 

\textbf{MSE of $\widehat{p}_{\textrm{DR}}(\pi)$}. We next study the statistical properties of the proposed preference estimator $\widehat{p}_{\textrm{DR}}(\pi)$. Without loss of generality, we also assume both $\widehat{\pi}_{\textrm{ref}}$ and $\widehat{g}$ (or $\widehat{r}$, in the case where the BT model holds) are obtained from external models independent of $\mathcal{D}$. This condition is mild. Even when such external models are not available and $\widehat{\pi}_{\textrm{ref}}$ and $\widehat{g}$ are learned internally from $\mathcal{D}$, independence can be preserved using sample-splitting and cross-fitting \citep{chernozhukov2018double}. 

\begin{theorem}[MSE]\label{thm:mse}
    Under Assumption \ref{assump:Coverage}, with $n$ data tuples, the semi-parametric efficiency bound (SEB) for estimating $p^*(\pi)$ is given by $n^{-1}\textrm{Var}(\psi(X,Y^{(1)}, Y^{(2)}, Z; \pi,\pi_{\textrm{ref}},g^*))$. 
    Additionally, the MSE of our $\widehat{p}_{\textrm{DR}}(\pi)$ equals
    \begin{align}\label{eqn:MSE}
        \textrm{SEB} + O\left(\frac{1}{n}\|\widehat{g} - g^*\|\right) +  O\left(\frac{1}{n}\Vert  \frac{\widehat{\pi}_{\textup{ref}}}{\pi_{\textup{ref}}}-1\|\right) +  O\left(\|  \frac{\widehat{\pi}_{\textup{ref}}}{\pi_{\textup{ref}}}-1\|^2\cdot\| \widehat{g} - g^*\|^2\right),
    \end{align}
    where $\|\widehat{\pi}_{\textup{ref}}/\pi_{\textup{ref}}-1\|$ and $\|\widehat{g} - g^*\|$ denote the root mean squared errors of $\widehat{\pi}_{\textup{ref}}/\pi_{\textup{ref}}$ and $\widehat{g}$; see Appendix~\ref{App: MSE proof} for their definitions. 
\end{theorem}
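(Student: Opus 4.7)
The plan is the standard bias--variance decomposition $\textrm{MSE}=(\mathbb{E}\widehat p_{\textrm{DR}}(\pi)-p^*(\pi))^2+\textrm{Var}(\widehat p_{\textrm{DR}}(\pi))$, applied after conditioning on the external sample from which $\widehat g$ and $\widehat\pi_{\textrm{ref}}$ are trained (so they can be treated as deterministic). Writing $\widehat\psi:=\psi(\cdot;\pi,\widehat\pi_{\textrm{ref}},\widehat g)$ and $\psi^*:=\psi(\cdot;\pi,\pi_{\textrm{ref}},g^*)$, we get $\widehat p_{\textrm{DR}}(\pi)-p^*(\pi)=(n^{-1}\sum_i\widehat\psi_i-\mathbb{E}\widehat\psi)+(\mathbb{E}\widehat\psi-p^*(\pi))$, which separates the mean-zero stochastic piece (variance $n^{-1}\textrm{Var}(\widehat\psi)$) from the deterministic bias.

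For the bias, I would first use the tower property on $Z$ given $(X,Y^{(1)},Y^{(2)})$ to replace $Z-\widehat g$ by $g^*-\widehat g$ inside the augmentation, then for each $a\in\{1,2\}$ apply the change-of-measure identity $\mathbb{E}_{Y\sim\pi_{\textrm{ref}}}[(\pi/\widehat\pi_{\textrm{ref}})(Y|X)\,h(Y)]=\mathbb{E}_{y\sim\pi}[(\pi_{\textrm{ref}}/\widehat\pi_{\textrm{ref}})(y|X)\,h(y)]$ to rewrite the IS factor. The sign $(-1)^{a-1}$, combined with the preference complementarity $g^*(y,y')+g^*(y',y)=1$ (and likewise for $\widehat g$), then allows the DM and augmentation contributions for $a=1$ and $a=2$ to merge into the single product expression $\mathbb{E}_{y\sim\pi,Y\sim\pi_{\textrm{ref}}}[(\widehat g-g^*)(X,y,Y)\,(1-\pi_{\textrm{ref}}(y|X)/\widehat\pi_{\textrm{ref}}(y|X))]$. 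Rewriting $1-\pi_{\textrm{ref}}/\widehat\pi_{\textrm{ref}}=(\pi_{\textrm{ref}}/\widehat\pi_{\textrm{ref}})(\widehat\pi_{\textrm{ref}}/\pi_{\textrm{ref}}-1)$, absorbing the $\pi/\widehat\pi_{\textrm{ref}}\le\epsilon^{-1}$ factor by Assumption~\ref{assump:Coverage}, and applying Cauchy--Schwarz yields $|\textrm{Bias}|=O(\|\widehat g-g^*\|\,\|\widehat\pi_{\textrm{ref}}/\pi_{\textrm{ref}}-1\|)$, whose square is the fourth error term in \eqref{eqn:MSE}.

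For the variance and the SEB, I would decompose $\textrm{Var}(\widehat\psi)=\textrm{Var}(\psi^*)+2\,\textrm{Cov}(\psi^*,\widehat\psi-\psi^*)+\textrm{Var}(\widehat\psi-\psi^*)$. The SEB itself follows from a standard efficient-influence-function (EIF) argument under the nonparametric model on $(X,Y^{(1)},Y^{(2)},Z)$, whose unknowns are $\pi_{\textrm{ref}}$ and $g^*$: compute the Gateaux derivative of $P\mapsto p^*(\pi)$ along regular parametric submodels and verify that $\psi^*-p^*(\pi)$ satisfies $\mathbb{E}[(\psi^*-p^*(\pi))\,S]=\partial_tp^*(\pi;t)|_{t=0}$ for every tangent-space score $S$, which identifies $\psi^*-p^*(\pi)$ as the EIF and hence gives $\textrm{SEB}=n^{-1}\textrm{Var}(\psi^*)$; the symmetric pairwise use of each tuple in the construction of $\psi$ is exactly what lands this function in the tangent space. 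For the cross-term, $\textrm{Var}(\psi^*)=O(1)$ under coverage and $g^*\in[0,1]$, while the algebraic identity $(\pi/\widehat\pi_{\textrm{ref}})(Z-\widehat g)-(\pi/\pi_{\textrm{ref}})(Z-g^*)=(\pi/\widehat\pi_{\textrm{ref}}-\pi/\pi_{\textrm{ref}})(Z-\widehat g)+(\pi/\pi_{\textrm{ref}})(g^*-\widehat g)$, combined with coverage and boundedness, gives $\textrm{Var}(\widehat\psi-\psi^*)=O(\|\widehat g-g^*\|^2+\|\widehat\pi_{\textrm{ref}}/\pi_{\textrm{ref}}-1\|^2)$. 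Cauchy--Schwarz on the covariance then produces exactly the second and third error terms $O(n^{-1}\|\widehat g-g^*\|)+O(n^{-1}\|\widehat\pi_{\textrm{ref}}/\pi_{\textrm{ref}}-1\|)$, with the direct $n^{-1}\textrm{Var}(\widehat\psi-\psi^*)$ contribution being of smaller order and absorbed into these bounds. Summing the bias squared and the three variance contributions reproduces \eqref{eqn:MSE}.

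The main obstacle is the EIF computation backing the SEB: because the joint law on $(X,Y^{(1)},Y^{(2)},Z)$ carries an exchangeability constraint ($Y^{(1)},Y^{(2)}\overset{\text{i.i.d.}}{\sim}\pi_{\textrm{ref}}$) and a conditional Bernoulli structure for $Z$ given $(X,Y^{(1)},Y^{(2)})$, the naive pathwise derivative must be projected onto the correct (exchangeable, Bernoulli-constrained) tangent space in order to conclude that $\psi^*-p^*(\pi)$ is actually the efficient influence function rather than merely an unbiased one. The bias factorization and the perturbation variance bound, by contrast, become routine once the change-of-measure identity, the preference complementarity, and the coverage assumption are in hand.
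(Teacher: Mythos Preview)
Your proposal is correct and follows essentially the same route as the paper. The paper organizes the argument around an explicit three-term decomposition $\widehat\psi-\psi^*=\mathrm{I}+\mathrm{II}+\mathrm{III}$, where $\mathrm{I}$ carries the factor $(Z-g^*)(\pi/\widehat\pi_{\textrm{ref}}-\pi/\pi_{\textrm{ref}})$, $\mathrm{II}$ collects the $(\widehat g-g^*)$ pieces paired with the \emph{true} IS weight, and $\mathrm{III}$ is the cross-error product; it then shows $\mathbb{E}[\mathrm{I}]=\mathbb{E}[\mathrm{II}]=0$ (the latter using change-of-measure plus preference complementarity, exactly as you identified) so that the squared bias is $(\mathbb{E}[\mathrm{III}])^2$, and bounds the variance by expanding $\textrm{Var}(\psi^*+\mathrm{I}+\mathrm{II}+\mathrm{III})$ termwise via Cauchy--Schwarz. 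Your two-term split of the augmentation difference and direct computation of $\mathbb{E}\widehat\psi-p^*(\pi)$ accomplish the same thing with slightly less bookkeeping. The SEB portion is handled identically: the paper proves in a separate lemma that $\psi^*-p^*(\pi)$ is the EIF by (i) verifying $\mathbb{E}[\psi^*\,\partial\log l]=\partial p^*$ for each of the three score components $(\gamma,b,\eta)$ of the likelihood and (ii) explicitly decomposing $\psi^*-p^*(\pi)$ into elements of $\mathcal{T}_f\oplus\mathcal{T}_\pi\oplus\mathcal{T}_g$, which is precisely the tangent-space projection you flag as the main obstacle.
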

The first part of Theorem \ref{thm:mse} establishes the SEB -- the smallest-possible MSE that one can hope for estimating $p^*(\pi)$. The second part upper bounds the excess MSE of our estimator over SEB. Specifically, this excess MSE consists of three parts: the first two are excess variance terms arising from estimation errors in the reference policy and the preference model, while the third is a bias term introduced by these estimation errors. Notably, (i) it can be shown that SEB scales as $O(n^{-1})$; (ii) the two variance terms decrease to zero as the sample size $n$ approaches infinity; (iii) the bias term is a product of the MSEs of $\widehat{\pi}_{\textup{ref}}$ and $\widehat{g}$. 
Consequently, when either $\widehat{\pi}_{\textup{ref}}$ or $\widehat{g}$ is correctly specified, the MSE of $\widehat{p}_{\textrm{DR}}(\pi)$ converges to zero as $n$ approaches to infinity. This establishes the double robustness property of our estimator, which we state below. 

\begin{corollary}[Doubly robust evaluation]\label{thm:doubly-robust property}
   Under Assumption \ref{assump:Coverage}, when either $\widehat{\pi}_{\textup{ref}}$ or $\widehat{g}$ is correctly specified, the MSE of $\widehat{p}_{\textrm{DR}}(\pi)$ decays to zero as $n$ approaches to infinity. 
\end{corollary}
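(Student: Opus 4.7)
The plan is to deduce Corollary~\ref{thm:doubly-robust property} directly from the finite-sample MSE decomposition in Theorem~\ref{thm:mse}. That theorem writes the MSE as the sum of four pieces: (i) the semi-parametric efficiency bound, which is $O(n^{-1})$; (ii) two cross-variance terms of order $O(n^{-1}\|\widehat{g}-g^*\|)$ and $O(n^{-1}\|\widehat{\pi}_{\textrm{ref}}/\pi_{\textrm{ref}}-1\|)$; and (iii) a product bias term $O(\|\widehat{\pi}_{\textrm{ref}}/\pi_{\textrm{ref}}-1\|^2\cdot\|\widehat{g}-g^*\|^2)$. The crucial observation, which is the analytic signature of the doubly-robust construction of the estimating function $\psi$ in \eqref{eqn:estfun}, is that the product bias term is the only piece that does not automatically vanish with sample size; once it is killed, everything else is $O(n^{-1})$.

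Given this, I would split the argument into two symmetric cases. In the first, $\widehat{g}=g^*$, so $\|\widehat{g}-g^*\|=0$; both the $\widehat{g}$-cross-variance term and the product bias term collapse to zero, leaving $\textrm{MSE}\le \textrm{SEB}+O(n^{-1}\|\widehat{\pi}_{\textrm{ref}}/\pi_{\textrm{ref}}-1\|)$. In the second, $\widehat{\pi}_{\textrm{ref}}=\pi_{\textrm{ref}}$, so $\|\widehat{\pi}_{\textrm{ref}}/\pi_{\textrm{ref}}-1\|=0$, giving $\textrm{MSE}\le \textrm{SEB}+O(n^{-1}\|\widehat{g}-g^*\|)$. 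Letting $n\to\infty$ in either reduced bound yields the stated convergence, provided that the prefactor multiplying $n^{-1}$ is finite.

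The last step is therefore to verify that the surviving norm is bounded. Because $g^*$ and any reasonable estimator $\widehat{g}$ take values in $[0,1]$, the root mean squared error $\|\widehat{g}-g^*\|$ is at most a constant. The ratio norm is slightly more delicate, but Assumption~\ref{assump:Coverage} controls both $\pi/\pi_{\textrm{ref}}$ and $\pi/\widehat{\pi}_{\textrm{ref}}$ by $\epsilon^{-1}$, which renders $\widehat{\pi}_{\textrm{ref}}/\pi_{\textrm{ref}}$ bounded where $\pi$ puts mass, so the corresponding RMSE is finite. There is no serious obstacle in proving the corollary itself; Theorem~\ref{thm:mse} already packages all the analytic work, and the corollary is essentially a read-off recognizing which term in the decomposition encodes the robustness (the product bias) and which surviving terms are asymptotically negligible.
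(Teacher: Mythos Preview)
Your proposal is correct and matches the paper's approach: the paper's entire proof is the one-line statement that the corollary follows directly from Theorem~\ref{thm:mse}, and you carry out exactly that read-off, identifying the product bias term as the only piece not automatically $O(n^{-1})$ and killing it via correct specification of either nuisance. One minor remark: your justification that Assumption~\ref{assump:Coverage} alone forces $\widehat{\pi}_{\textrm{ref}}/\pi_{\textrm{ref}}$ to be bounded is not quite right (upper bounds on $\pi/\pi_{\textrm{ref}}$ and $\pi/\widehat{\pi}_{\textrm{ref}}$ do not by themselves control their ratio), but the paper simply treats finiteness of $\|\widehat{\pi}_{\textrm{ref}}/\pi_{\textrm{ref}}-1\|$ as a standing regularity condition in the proof of Theorem~\ref{thm:mse}, so your conclusion is unaffected.
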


We next consider the case where both $\widehat{\pi}_{\textup{ref}}$ and $\widehat{g}$ are ``approximately'' correct in that both root MSEs $\|\widehat{\pi}_{\textup{ref}}/\pi_{\textup{ref}}-1\|$ and $\|\widehat{g} - g^*\|$ decay to zero as $n\to \infty$. Since SEB is of the order $O(n^{-1})$, the first two variance terms in \eqref{eqn:MSE} decay to zero at a even faster rate than SEB. 
Meanwhile, when the product $\|\widehat{\pi}_{\textup{ref}}/\pi_{\textup{ref}}-1\| \|\widehat{g} - g^*\|=o(n^{-1/2})$, the last bias term in \eqref{eqn:MSE} becomes negligible compared to SEB as well. Together, these conditions imply that the MSE of $\widehat{p}_{\textrm{DR}}(\pi)$ asymptotically matches the SEB, which establishes the semi-parametric efficiency of our estimator. We also remark that conditions similar to $\|\widehat{\pi}_{\textup{ref}}/\pi_{\textup{ref}}-1\| \|\widehat{g} - g^*\|=o(n^{-1/2})$ are widely assumed in the literature \citep[see e.g.,][]{chernozhukov2018double,Shi_2020_Breaking,farrell2021deep,Kallus_2022_efficientlybreakthecurse}. 
\begin{corollary}[Semi-parametric efficiency]\label{thm:semi-parametric efficiency}
    Under Assumption \ref{assump:Coverage}, when both $\| \frac{\widehat{\pi}_{\textup{ref}}}{\pi_{\textup{ref}}}-1\|$ and $\|\widehat{g} - g^*\|$ decay to zero as $n\to \infty$, and their product is $o(n^{-1/2})$, then MSE($\widehat{p}_{\textrm{DR}}(\pi)$)/SEB $\to 1$ as $n\to \infty$. 
\end{corollary}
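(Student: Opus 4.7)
The corollary falls out almost immediately from Theorem~\ref{thm:mse}. My approach is to show that each of the three excess terms appearing in \eqref{eqn:MSE} is of smaller order than the SEB under the stated rate conditions, so that $\mathrm{MSE}(\widehat p_{\mathrm{DR}}(\pi))/\mathrm{SEB} \to 1$. The starting point will be the decomposition of Theorem~\ref{thm:mse} together with the identity $\mathrm{SEB} = n^{-1}\mathrm{Var}\bigl(\psi(X,Y^{(1)},Y^{(2)},Z;\pi,\pi_{\mathrm{ref}},g^*)\bigr)$; I will compare each remainder to this $n^{-1}$ benchmark.

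First I would handle the two ``variance-type'' remainders. Both are of the form $n^{-1}$ multiplied by an estimation-error norm. Since $\|\widehat g-g^*\|\to 0$ and $\bigl\|\widehat\pi_{\mathrm{ref}}/\pi_{\mathrm{ref}}-1\bigr\|\to 0$ by hypothesis, each of these terms is $o(n^{-1})$, and thus negligible compared to SEB. No additional argument beyond invoking the assumed rates is needed at this step.

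Next I would treat the product-bias remainder, which is the characteristic cross term of doubly robust estimators. Factoring it as
\[
O\!\left(\Bigl\|\tfrac{\widehat\pi_{\mathrm{ref}}}{\pi_{\mathrm{ref}}}-1\Bigr\|^2 \cdot \|\widehat g-g^*\|^2\right) \;=\; O\!\left(\Bigl(\bigl\|\tfrac{\widehat\pi_{\mathrm{ref}}}{\pi_{\mathrm{ref}}}-1\bigr\|\cdot\|\widehat g-g^*\|\Bigr)^2\right),
\]
and invoking the assumed product rate $\bigl\|\widehat\pi_{\mathrm{ref}}/\pi_{\mathrm{ref}}-1\bigr\|\cdot\|\widehat g-g^*\| = o(n^{-1/2})$, this remainder is $o(n^{-1})$ as well. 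Summing gives $\mathrm{MSE}(\widehat p_{\mathrm{DR}}(\pi)) = \mathrm{SEB} + o(n^{-1})$.

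\textbf{Main obstacle.} The only subtle step is turning $\mathrm{MSE} - \mathrm{SEB} = o(n^{-1})$ into the ratio statement $\mathrm{MSE}/\mathrm{SEB}\to 1$. This requires $\mathrm{SEB} = \Theta(n^{-1})$, i.e., a strictly positive lower bound on $\mathrm{Var}(\psi(\,\cdot\,;\pi,\pi_{\mathrm{ref}},g^*))$. I expect this to be the one place where a brief separate justification is needed: under Assumption~\ref{assump:Coverage} together with mild nondegeneracy of the preference signal (so that $g^*$ is not identically $0$ or $1$ and $\pi$ does not already coincide with $\pi_{\mathrm{ref}}$ almost surely), the estimating function $\psi$ genuinely depends on the random preference $Z$ and on the sampled response, and its variance is bounded below by a positive constant. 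With this in hand, writing $\mathrm{MSE} = \mathrm{SEB}(1 + o(1))$ and dividing by SEB yields the corollary.
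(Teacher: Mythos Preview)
Your proposal is correct and follows essentially the same approach as the paper, which simply states that the corollary ``follow[s] directly from the assertion of Theorem~\ref{thm:mse}'' and, in the surrounding discussion, gives exactly the term-by-term comparison you outline. You are in fact more careful than the paper in flagging that the ratio conclusion implicitly requires $\mathrm{SEB}=\Theta(n^{-1})$, i.e., a strictly positive lower bound on $\mathrm{Var}(\psi)$; the paper does not make this nondegeneracy condition explicit.
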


\textbf{Regret of $\widehat{\pi}$.} Next, we derive the statistical properties of the proposed policy $\widehat{\pi}$. When the BT model assumption is violated, we measure the performance gap of a given policy $\pi$ using the gap between the total preference of the best in-class policy and that of $\pi$, i.e., $\textrm{Gap}(\pi)=\sup_{\pi' \in \Pi} p^*(\pi')-p^*(\pi)$. By definition, a smaller performance gap indicates a better policy.

\begin{theorem}[Performance gap]\label{thm:totalpreference}
    Under Assumptions \ref{assump:Coverage} (assuming it holds for any $\pi\in \Pi$) and \ref{assump:VC}, then
    \begin{equation}\label{eqn:performancegap}
         \textrm{Gap}(\widehat{\pi})=O\Big(\beta +\sqrt{\frac{v}{n}} + \frac{v}{ n} + \Vert \widehat{\pi}_{\textup{ref}}/\pi_{\textup{ref}}-1\Vert \Vert \widehat{g} - g^*\Vert\Big).
    \end{equation}
\end{theorem}
It can be seen from \eqref{eqn:performancegap} that the performance gap depends on several factors: (i) it decays with the sample size $n$; (ii) it increases with the regularization parameter $\beta$ in the KL divergence penalty; 
(iii) it increases with $v$, which measures the complexity of the policy class; (iv) it decreases with the estimating error of the reference policy and the preference model. Crucially, the last dependence appears as the product  $\|\widehat{\pi}_{\textup{ref}}/\pi_{\textup{ref}}-1\| \|\widehat{g} - g^*\|$, which enables us to establish the double robustness property in the context of preference optimization. 

\begin{corollary}[Doubly robust optimization]\label{coro:doublerobustpo}
    Suppose $\beta\to 0$ as $n\to 0$. Under the conditions in Theorem \ref{thm:totalpreference}, when either $\widehat{\pi}_{\textup{ref}}$ or $\widehat{g}$ is correctly specified, Gap($\widehat{\pi}$) decays to zero as $n\to \infty$.  
\end{corollary}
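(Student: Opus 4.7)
The plan is to derive the corollary as a direct consequence of the regret bound established in Theorem \ref{thm:totalpreference}, by verifying that each of the four terms on the right-hand side of \eqref{eqn:regretbound} tends to zero as $n \to \infty$ under the stated hypotheses.

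First, I would dispatch the terms that are purely algorithmic or sample-size driven. The term $\beta$ vanishes by the standing hypothesis $\beta \to 0$ (reading the ``$n \to 0$'' in the statement as a typo for $n \to \infty$). Since Assumption \ref{assump:VC} guarantees that the VC index $v$ is a finite constant independent of $n$, both $\sqrt{v/n}$ and $v/n$ trivially decay to zero.

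The core of the argument is to show that the cross product $\|\widehat{\pi}_{\textup{ref}}/\pi_{\textup{ref}}-1\|\,\|\widehat{g} - g^*\|$ vanishes, and this is where the doubly robust structure of the bound does its work. I would split into the two cases dictated by the ``either/or'' hypothesis. If $\widehat{g}$ is correctly specified, then $\|\widehat{g} - g^*\| \to 0$, so it suffices to argue that the companion factor $\|\widehat{\pi}_{\textup{ref}}/\pi_{\textup{ref}}-1\|$ remains bounded; this is controlled by the coverage condition in Assumption \ref{assump:Coverage}, which bounds the importance ratios $\pi/\pi_{\textup{ref}}$ and $\pi/\widehat{\pi}_{\textup{ref}}$ uniformly. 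Conversely, if $\widehat{\pi}_{\textup{ref}}$ is correctly specified, then $\|\widehat{\pi}_{\textup{ref}}/\pi_{\textup{ref}}-1\| \to 0$, and the companion factor $\|\widehat{g} - g^*\|$ is automatically bounded by $1$ since both $\widehat{g}$ and $g^*$ are preference probabilities in $[0,1]$. In either scenario, the product tends to zero.

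The main obstacle, modest as it is, lies in verifying the boundedness of the non-converging factor against the precise norm definitions deferred to Appendix \ref{App: MSE proof}; I would check that Assumption \ref{assump:Coverage} does suffice to control the relevant norm of $\widehat{\pi}_{\textup{ref}}/\pi_{\textup{ref}}-1$ (for instance, because the norm can be taken with respect to a measure absolutely continuous with respect to $\pi$, under which the ratio is uniformly bounded). Once this verification is in place, summing the four decaying contributions immediately yields $\textrm{Reg}(\widehat{\pi}) = o(1)$, which is exactly the claimed conclusion.
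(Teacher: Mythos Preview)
Your proposal is correct and matches the paper's approach exactly: the paper's own proof is the single sentence ``The proof of Corollary~\ref{coro:doublerobustpo} follows directly from the assertion of Theorem~\ref{thm:totalpreference},'' and your term-by-term breakdown of the bound \eqref{eqn:regretbound} is precisely the elaboration that sentence implies. One minor point: when a nuisance is \emph{correctly specified} the corresponding norm is exactly zero (not merely $o(1)$), so the product vanishes outright and the boundedness of the companion factor---which the paper in any case treats as a standing assumption in Appendix~\ref{App: MSE proof} rather than a consequence of Assumption~\ref{assump:Coverage}---is not strictly needed.
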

Finally, we restrict our attention to the ideal setting where the BT model holds and upper bound the suboptimality gap, defined as the difference in the expected reward between the optimal policy $\pi^*$ and our $\widehat{\pi}$, i.e., $J(\pi^*)-J(\widehat{\pi})$. 
\begin{theorem}[Suboptimality gap]\label{thm:suboptimal}
    Suppose the BT model assumption in \eqref{eqn:BTmodel} holds. Under Assumptions \ref{assump:Boundedness},  \ref{assump:Realizability}, and the conditions in Theorem \ref{thm:totalpreference}, the suboptimality gap of $\widehat{\pi}$ is upper bounded by
    \begin{equation}\label{eqn:regretbound}
         O\Big(\beta+\sqrt{\frac{v}{n}} + \frac{v}{n} + \Vert \widehat{\pi}_{\textup{ref}}/\pi_{\textup{ref}}-1\Vert \Vert \widehat{r} - r^*\Vert\Big).
    \end{equation}
    Meanwhile, for PPO-based algorithms, their suboptimality gaps are bounded by 
    \begin{equation}\label{eqn:regretboundPPO}
        O\Big( \beta+ \sqrt{\frac{v}{n}}+\frac{v}{n}+\Vert \widehat{r} - r^*\Vert  \Big).
    \end{equation}
    Finally, for DPO-based algorithms, their suboptimality gaps are bounded by
    \begin{equation}\label{eqn:regretboundDPO}
        O\Big(\exp(-\bar{c}\beta^{-1})+\beta^{-1}\sqrt{\frac{v}{ n}}+\Vert  \widehat{\pi}_{\textup{ref}}/\pi_{\textup{ref}}-1\Vert\Big),
    \end{equation}
    for some constant $\bar{c}>0$, under conditions specified in Appendix \ref{sec:proof of corollary 7}. 
\end{theorem}
According to \eqref{eqn:regretbound} and \eqref{eqn:regretboundPPO} that, by using a sufficiently small $\beta$, the suboptimality gaps of PPO-based and our algorithms are of the order $O(n^{-1/2}+\Vert \widehat{r} - r^*\Vert)$ and $O(n^{-1/2}+\Vert  \widehat{\pi}_{\textup{ref}}/\pi_{\textup{ref}}-1\Vert \Vert \widehat{r} - r^*\Vert)$, respectively. As for DPO-based algorithms, 
setting $\beta = \bar{c}^{-1}C\log n$ for some constant $C > 0$ makes the first term in \eqref{eqn:regretboundDPO} of order $O(n^{-C})$, which can be made arbitrarily small with a sufficiently large $C$. The second term remains of order $O(n^{-1/2})$ up to a logarithmic factor, yielding an overall suboptimality gap of $O(n^{-1/2}\log n + \|\widehat{\pi}_{\textup{ref}}/\pi_{\textup{ref}} -1\|)$. Consequently, our algorithm's suboptimality gap 
is more robust to estimation errors in the reference policy and preference model, as these errors influence our bound only through their product. To the contrary, for PPO- and DPO-based algorithms, these errors affect their suboptimality bounds in the first order. In particular, when these errors converge to zero at a rate of $O(n^{-c})$ for some $0<c<1/2$, our algorithm achieves strictly smaller suboptimality bounds than both DPO- and PPO-based algorithms.

To conclude this section, we make two remarks. First, a key novelty of our analysis lies in the derivation of DPO's sub-optimality bounds without relying on linearity assumptions. While there is extensive literature on DPO-based algorithms, their sub-optimality gaps are relatively underexplored. Some recent works derive such bounds under strong linear assumptions, which simplify the analysis by allowing the sub-optimality gap to be expressed directly in terms of parameter estimation error \citep{nika2024reward}. In contrast, our analysis proceeds without such linear assumptions, which makes the derivation much more challenging. Second, Theorem \ref{thm:suboptimal} establishes upper bounds on the sub-optimality gaps, and we discuss the tightness of these bounds in Appendix~\ref{sec:proof of corollary 7}.

\section{Experiments}\label{sec:exp}
In this section, we first use the IMDb dataset \citep{maas-EtAl:2011:ACL-HLT2011} to empirically validate the double robustness property of our preference estimator $\widehat{p}_{\textrm{DR}}$ (Equation~\ref{eqn:DR}) established in Corollary~\ref{thm:doubly-robust property}. We next compare the proposed preference optimization algorithm (Equation \ref{eqn:hatpi}) against baseline approaches on the  \textit{Too Long; Didn't Read} \citep[TL;DR,][]{volske2017tldr} and \textit{Anthropic Helpful and Harmless} \citep[HH,][]{bai2022training} datasets.

\begin{wrapfigure}[16]{R}{0.45\textwidth}  
  \vspace{-8pt}                            
  \includegraphics[width=\linewidth]{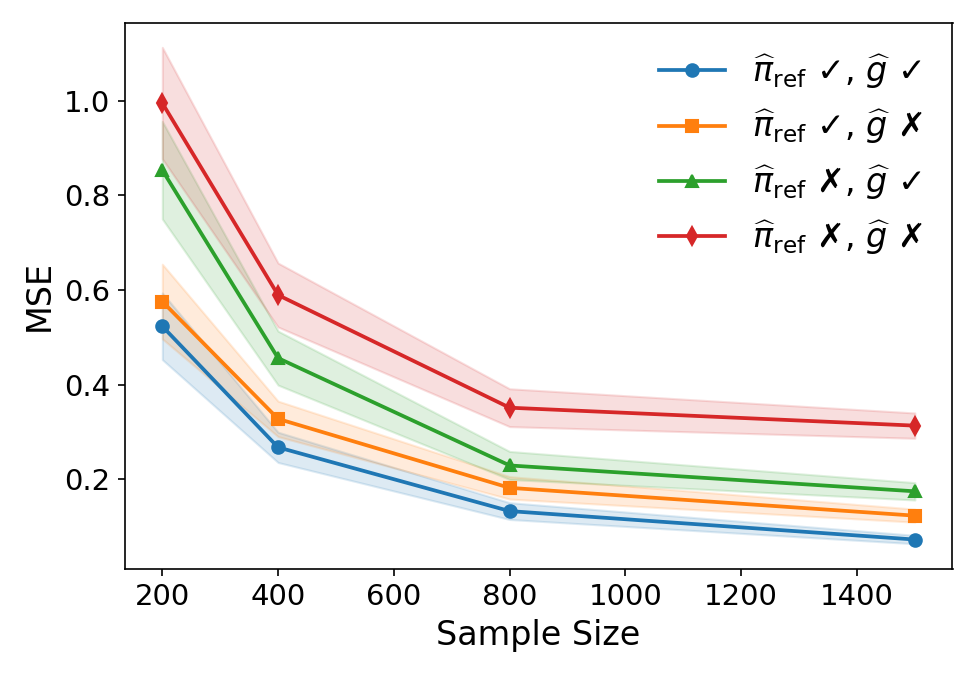}
  \vspace*{-10pt}
  \caption{MSEs of different preference evaluation estimators on the IMDb dataset.
Shaded areas visualize the 95\% confidence bands.}
  \label{fig: MSE_CI}
\end{wrapfigure}

These datasets are particularly suited for studying preference and/or reference model misspecification. Specifically: (i) TL;DR illustrates reference model misspecification -- we use the SFT model trained by \texttt{CleanRL} \citep{huang2024n+}, which was learned on a filtered subset of the data, leading to a misspecified reference policy; (ii) HH illustrates preference model misspecification, as prior works suggest this dataset contains unmodeled pairwise noise beyond BT \citep{chowdhury2024provably,wu2024towards};
(iii) IMDb illustrates both types of misspecification, since it is synthetic dataset where we have access to the ground-truth preference and reference models.


\textbf{Preference Evaluation.}  We consider the {\textit{controlled sentiment generation}} task which aims to produce positive movie reviews using the IMDb dataset. We first apply SFT to the \texttt{EleutherAI} 
base model \citep{gpt-neo}, which serves as the reference policy for response generation. The generated responses are then annotated using a pre-trained sentiment classifier to produce preference labels. Using these synthetic data, we train an optimal policy via DPO. Our objective in this section is to evaluate the total preference of this DPO-trained policy over the SFT-based reference policy. Its oracle value, computed via Monte Carlo, is 0.681. Additional details on data generation and model training are provided in Appendix~\ref{sec: DRPE}.

To empirically assess the double robustness property, we evaluate four variants of our preference estimator, each with either the preference model and/or the reference policy correctly specified or misspecified. To misspecify the preference model, we set $\widehat{g}$ to a uniformly random value in $[0,1]$. To misspecify the reference policy, we use the unfine-tuned \texttt{EleutherAI} 
base model. Figure~\ref{fig: MSE_CI} displays the MSEs (solid lines on left panel) and their associated 95\% confidence intervals (shaded areas) of the four estimators across different sample sizes, averaged over 500 simulations. It can be seen that the estimator with both models misspecified (red line) exhibits a significantly larger MSE than the other three and shows minimal improvement beyond 800 samples. To the contrary, when either the preference model or the reference policy is correctly specified (yellow and green lines), the MSE is substantially reduced with a moderately large sample size. This aligns with the double robustness property. Meanwhile, the estimator with both correctly specified models (blue line) achieves the lowest MSE (being very close to zero with 1500 data tuples), supporting its semiparametric efficiency.


\begin{figure}[b]
  \centering
  \vspace*{-5pt}
  \begin{minipage}[b]{0.45\textwidth}
    \centering
    \includegraphics[width=\textwidth]{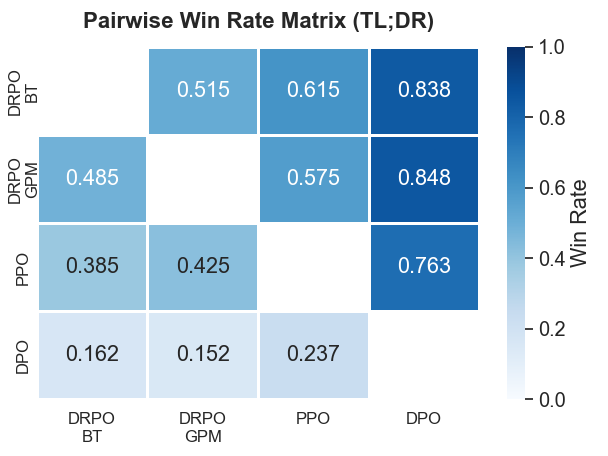}
  \end{minipage}
  \hspace{0.015\textwidth}
  \begin{minipage}[b]{0.45\textwidth}
    \centering
    \includegraphics[width=\textwidth]{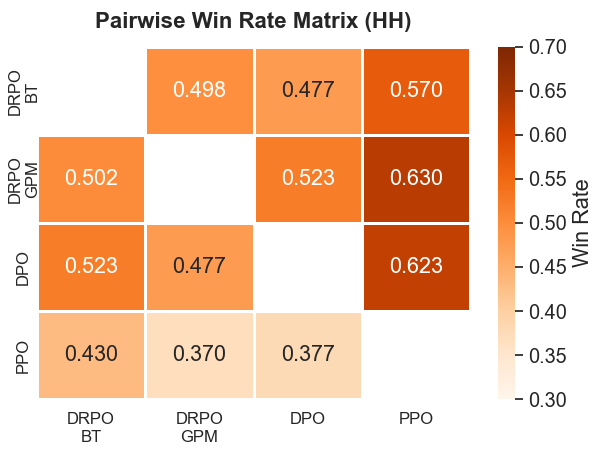}
  \end{minipage}
  \caption{\textbf{Pairwise win rate} matrices between different methods across two datasets. \textbf{Left:} TL;DR. 
        \textbf{Right:} HH. Each entry indicates how often the row method outperforms the column method. }
  \label{fig:horizontal-two}
\end{figure}
\textbf{Preference Optimization}. This section considers two tasks: \textit{summarization} and \textit{human dialogue}.
First, for summarization, we use the TL;DR dataset, where lengthy Reddit posts serve as prompts and preference annotations are from \citet{stiennon2020learning}, to fine-tune LLMs for concise, informative summaries. 
Both SFT and reward models for this task are obtained from \texttt{CleanRL}. 
Second, for human dialogue, the HH dataset (human queries as prompts) is used to align LLMs for helpful responses. Since the original SFT and reward models are unavailable, we train them ourselves using the \texttt{TRL} framework \citep{vonwerra2022trl}. 
For each task, a reward-based BT preference model (using the same reward model for PPO training) and a general preference model \cite{zhang2024general} are adopted to serve as $\widehat{g}$ (donating as DRPO-BT and DRPO-GPM). Refer to more details of the implementation and baseline model training in Appendix \ref{sec: DRPO}. 

We compare our DRPO against \textbf{nine} baseline fine-tuning algorithms, including the standard PPO and DPO, and seven variants of DPO: (i) Dr. DPO \cite{wu2024towards}; (ii) rDPO \citep{chowdhury2024provably}; (iii) cDPO \citep{mitchellnote}; (iv) CPO \citep{xu2024contrastive}; (v) ORPO \citep{hong2024orpo}; (vi) IPO \citep{azar2024general}; (vii) RSO \citep{zhao2022calibrating}. Given the absence of ground-truth preference and reward models, we adopt two evaluation strategies. The first strategy uses in-distribution data. Specifically, for both TL;DR and HH, one portion of the dataset is used to fine-tune the LLMs, while the remaining portion is used to generate responses for evaluation. Following prior works \citep{rafailov2023direct, wu2024pairwise, ye2025robust}, we employ GPT-4o-based annotator to compare the quality of responses produced by two LLMs (details in Appendix~\ref{sec: DRPO}). Win rates -- the percentage of cases in which one LLM’s response is preferred over another -- are reported at the default temperature of 1.0 in Figure~\ref{fig:horizontal-two} and Table~\ref{tab:drpo-baselines-tldr}, with results at other temperatures provided in Appendix~\ref{sec: add-emp}. The second strategy uses the out-of-distribution data provided via the AlpacaEval~2.0 benchmark \citep{dubois2024length}, which covers a broad collection of human-written instructions designed for general-purpose tasks. Pairwise comparisons are conducted using a GPT-4-Turbo-based annotator. Since summarization is a domain-specific task, we apply the out-of-distribution evaluation only to human dialogue (Table \ref{tab:drpo-lc-hh}).


In \textit{summarization}, both DRPO-BT and DRPO-GPM substantially outperform PPO, DPO (see the left panel of Figure~\ref{fig:horizontal-two}), and DPO’s variants (Table~\ref{tab:drpo-baselines-tldr}). As mentioned earlier, the reference policy in this dataset is misspecified, likely contributing to the weaker performance of DPO and its variants. Despite trained on the same misspecified reference policy, the superior performance of DRPO highlights its robustness to such misspecification.
In \textit{human dialogue}, DRPO-GPM demonstrates the best in-distribution performance, whereas DRPO-BT outperforms PPO and achieves comparable performance to DPO (see the right panel of Figure~\ref{fig:horizontal-two}). The poor performance of PPO partly supports the potential misspecification of BT in this task. Despite using the same preference model, DRPO-BT achieves a win rate of 57\% against PPO, demonstrating its robustness. As for out-of-distribution evaluation, DRPO 
performs comparably to robust DPO variants (cDPO, rDPO and Dr. DPO) while attaining higher win rates than other variants (Table \ref{tab:drpo-lc-hh}). As discussed earlier, the HH dataset likely contains pairwise noise, which these robust variants are explicitly designed to handle, whereas DRPO employs a preference model that does not account for such noise. If DRPO were to adopt the same noise-aware preference model used in these methods, its performance would likely improve further. 

\begin{table*}[t]
    \centering
    \renewcommand{\arraystretch}{1.2}
    \begin{minipage}{0.46\textwidth}
        \centering
        \caption{Win rates of DRPO (using BT as the preference model) compared to various baseline algorithms on TL;DR. Higher win rates indicate better performance of DRPO over the baseline algorithm.}\label{tab:drpo-baselines-tldr}
        \small
        \begin{tabular}{lc}
            \toprule
            Baseline Model & Win Rate (\%) \\
            \midrule
            DRPO vs Dr.~DPO & 72.5 \\
            DRPO vs rDPO    & 65.0 \\
            DRPO vs cDPO    & 63.5 \\
            DRPO vs CPO     & 90.0 \\
            DRPO vs ORPO    & 57.5 \\
            DRPO vs IPO     & 98.5 \\
            DRPO vs RSO     & 69.5 \\
            \bottomrule
        \end{tabular}
    \end{minipage}
    \hfill
    \begin{minipage}{0.5\textwidth}
        \centering
        \caption{Win rates of different algorithms compared to SFT on HH. ``LC Win Rate'' denotes the length-controlled win rate. DRPO uses GPM as preference model.}\label{tab:drpo-lc-hh}
        \small
        \begin{tabular}{lcc}
            \toprule
            Model   & LC Win Rate (\%) & Win Rate (\%) \\
            \midrule
            Dr.~DPO & 92.16 & 90.93 \\
            rDPO    & 86.89 & 85.71 \\
            cDPO    & 85.05 & 84.28 \\
            CPO     & 73.59 & 71.28 \\
            ORPO    & 75.92 & 53.91 \\
            IPO     & 78.29 & 78.88 \\
            RSO     & 80.62 & 79.50 \\
            DRPO    & 86.38 & 84.84 \\
            \bottomrule
        \end{tabular}
    \end{minipage}
\end{table*}

\section{Discussion}\label{sec:dis}

This work introduces a novel doubly robust preference optimization (DRPO) for LLM fine-tuning. Our approach enables accurate preference evaluation and policy optimization, providing robustness against misspecifications in both the reference policy and the preference model. We formally establish that our preference evaluation estimator is both doubly robust (Corollary \ref{thm:doubly-robust property}) and semiparametrically efficient (Corollary \ref{thm:semi-parametric efficiency}) and demonstrate that our optimization procedure yields policies with a small performance gap (Theorem \ref{thm:totalpreference}), and a lower suboptimality bound than DPO and PPO (Theorem \ref{thm:suboptimal}). Our empirical results reinforce the theoretical advantages, demonstrating DRPO’s robustness to reference-policy misspecification (Table~\ref{tab:drpo-baselines-tldr}; Figure~\ref{fig:horizontal-two}, left) and preference-model misspecification (Table~\ref{tab:drpo-lc-hh}; Figure~\ref{fig:horizontal-two}, right).

\clearpage

\bibliographystyle{unsrtnat}
\bibliography{bibliography}
\newpage
\renewcommand{\thesection}{\Alph{section}}
\setcounter{section}{0}
\noindent {\LARGE\textbf{Appendix}}

\section{Technical Proof} \label{App: Proof}
In this section, we present the regularity conditions and proofs for all the lemmas and theorems. By nature, the vocabulary size is finite; as such, all random variables -- including the prompts $X$ and the responses $Y$ -- are discrete. We assume that $\epsilon$ in the coverage assumption is a bounded constant , which is why it does not explicitly appear in the error bound. However, in the proof of Theorems, for completeness, we will explicitly highlight how the leading terms of the error bounds depend on $\epsilon$.

\subsection{Proof of Lemma \ref{lemma1}}\label{sec:proof-lemma-1}
By direct calculation, it follows that
\begin{eqnarray}
    \mathbb{E}\left\{ w(Y^{(1)},X)Z \right\}
    &=&\mathbb{E}\left\{\mathbb{E}\left[\frac{\pi(Y^{(1)}|X)}{\pi_{\text{ref}}(Y^{(1)}|X)}\mathbb{I}\{Y^{(1)}\succ Y^{(2)}\}\Bigg|X,Y^{(1)},Y^{(2)}\right] \right\}\nonumber\\
    &=&\mathbb{E}\left\{\frac{\pi(Y^{(1)}|X)}{\pi_{\text{ref}}(Y^{(1)}|X)}g^*\left(Y^{(1)}, Y^{(2)},X\right) \right\}\nonumber\\
    &=&\mathbb{E}\left\{\sum_y \pi(y|X)g^*\left(y, Y^{(2)},X\right)\right\}\nonumber\\
    &=&\mathbb{E}\left\{ \mathbb{E}_{y\sim\pi(\bullet|X)}g^*\left(y, Y^{(2)},X\right) \right\},\nonumber
\end{eqnarray}
where the first equality is derived by the law of total expectation, the second equality follows from the definition of the preference function $g^*$, and the third equality follows from the change-of-measure theorem (e.g., Radon–Nikodym theorem). 

Following a similar argument and using the fact that $1-Z = \mathbb{I}(Y^{(2)}\succ Y^{(1)})$, we obtain
\begin{eqnarray}
    \mathbb{E}\left\{ w(Y^{(2)},X)(1-Z) \right\} &=& \mathbb{E}\left\{ \mathbb{E}_{y\sim\pi(\bullet|X)}g^*\left(y, Y^{(1)},X\right) \right\}. \nonumber
\end{eqnarray}
Consequently, $p^*(\pi)=\frac{1}{2}\mathbb{E} [w(Y^{(1)},X)Z+ w(Y^{(2)},X)(1-Z)]$, which finishes the proof of the lemma.

\subsection{Auxiliary lemma for proving Theorem \ref{thm:mse}}
Before proceeding to the proof of Theorem \ref{thm:mse}, we first introduce an auxiliary lemma.
\begin{lemma}\label{lem:EIF}
    Under Assumption \ref{assump:Coverage}, with $n$ independent data tuple $W_i = (X_i,Y_{i}^{(1)},Y_{i}^{(2)},Z_i), i=1,\ldots n$, the efficient influence function \citep[see e.g.,][for the detailed definition]{Tsiatis_2006_Semiparametric} for $p^*(\pi)$ is given by $\frac{1}{n}\sum_{i=1}^n \psi(X_i,Y_{i}^{(1)},Y_{i}^{(2)},Z_i;\pi,\pi_{\textup{ref}},g^*) - p^*(\pi)$, with $\psi$ defined in equation \eqref{eqn:estfun}.
\end{lemma}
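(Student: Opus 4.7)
\textbf{Proof proposal for Lemma \ref{lem:EIF}.}

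The plan is to follow the standard pathwise differentiation recipe (e.g., \citet{Tsiatis_2006_Semiparametric}). Consider a regular one–dimensional parametric submodel $\{P_t\}$ with $P_0$ equal to the true distribution and score $S(W) = \partial \log p_t(W)/\partial t\vert_{t=0}$. Using the factorization $p(x,y^{(1)},y^{(2)},z)=p(x)\,\pi_{\text{ref}}(y^{(1)}|x)\,\pi_{\text{ref}}(y^{(2)}|x)\,g^{*}(x,y^{(1)},y^{(2)})^{z}(1-g^{*})^{1-z}$, decompose $S=s_X(X)+s_\pi(Y^{(1)}|X)+s_\pi(Y^{(2)}|X)+S_Z$, where each component has conditional mean zero with respect to the appropriate sub-$\sigma$-field and $S_Z=(Z-g^{*})\dot g^{*}/(g^{*}(1-g^{*}))$ with $\dot g^{*}(x,y,y')=\partial g^{*}_t(x,y,y')/\partial t\vert_{t=0}$. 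Since the model is nonparametric, the tangent space coincides with the full space $L^2_0(P)$, so it suffices to exhibit a mean-zero, square-integrable $\phi(W)$ such that $\partial p^*(\pi;t)/\partial t\vert_{t=0}=\mathbb{E}[\phi(W)\,S(W)]$ for every submodel; such a $\phi$ is automatically the EIF.

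First I would candidate $\phi(W)=\psi(W;\pi,\pi_{\text{ref}},g^{*})-p^{*}(\pi)$. Mean-zero follows because each augmentation term $w_a(Z-g^{*})$ has conditional mean $0$ given $(X,Y^{(1)},Y^{(2)})$, so $\mathbb{E}[\psi]=\tfrac12\sum_a\mathbb{E}[\mathbb{E}_{y\sim\pi(\cdot|X)}g^{*}(X,y,Y^{(a)})]=p^{*}(\pi)$; square-integrability follows from Assumption \ref{assump:Coverage}, which bounds the IS ratios. Next I would differentiate
\begin{equation*}
p^{*}(\pi;t)=\tfrac12\sum_{a=1}^{2}\sum_{x,y^{(a)}}p_t(x)\,\pi_{\text{ref},t}(y^{(a)}|x)\,\textstyle\sum_y\pi(y|x)g^{*}_t(x,y,y^{(a)})
\end{equation*}
at $t=0$, producing three types of terms: an $s_X$-term $\tfrac12\sum_a\mathbb{E}[s_X(X)G_a(X,Y^{(a)})]$, an $s_\pi$-term $\tfrac12\sum_a\mathbb{E}[s_\pi(Y^{(a)}|X)G_a(X,Y^{(a)})]$, and a $\dot g^{*}$-term $\tfrac12\sum_a\mathbb{E}[\sum_y\pi(y|X)\dot g^{*}(X,y,Y^{(a)})]$, where $G_a(x,y^{(a)})=\sum_y\pi(y|x)g^{*}(x,y,y^{(a)})$.

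Then I would match each piece against $\mathbb{E}[\phi\,S]$ component by component. The $s_X$ and $s_\pi$ contributions come solely from the first (DM) part of $\psi$: the augmentation terms vanish after conditioning on $(X,Y^{(1)},Y^{(2)})$, and conditional independence of $Y^{(1)},Y^{(2)}$ together with $\mathbb{E}[s_\pi|X]=0$ kills the ``cross'' pieces, leaving exactly the required expressions. The delicate step is the $S_Z$ contribution: using $\mathbb{E}[(Z-g^{*})^{2}|X,Y^{(1)},Y^{(2)}]=g^{*}(1-g^{*})$ one obtains $\mathbb{E}[\phi\,S_Z]=\tfrac12\,\mathbb{E}[(w_1-w_2)\dot g^{*}(X,Y^{(1)},Y^{(2)})]$, and I would then apply the change-of-measure argument of Lemma \ref{lemma1} together with the antisymmetry $\dot g^{*}(x,y,y')=-\dot g^{*}(x,y',y)$ (inherited from $g^{*}(x,y,y')+g^{*}(x,y',y)=1$) to rewrite this as $\tfrac12\sum_a\mathbb{E}[\sum_y\pi(y|X)\dot g^{*}(X,y,Y^{(a)})]$, matching the pathwise derivative.

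The main obstacle is this last step: making the signs work in the $S_Z$ contribution requires carefully combining the sign factor $(-1)^{a-1}$ in $\psi$, the change-of-measure on the importance weight, and the antisymmetry of $\dot g^{*}$ in its last two arguments; this is the only place where the specific pairwise structure of the augmentation (rather than a naive one-sided correction) matters. Once this is verified, $\phi$ is a mean-zero, square-integrable element whose inner product with every submodel score recovers the pathwise derivative, so $\phi$ is the unique EIF, and averaging over the $n$ i.i.d.\ tuples gives the sample-level statement in the lemma.
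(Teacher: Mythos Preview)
Your overall strategy—verifying the pathwise-derivative identity $\partial_t p^*(\pi;t)|_{t=0}=\mathbb{E}[\phi\,S]$ component by component via the factorization of the likelihood, and handling the $S_Z$ piece through the antisymmetry $\dot g^*(x,y,y')=-\dot g^*(x,y',y)$—is exactly what the paper does. That part of the argument is correct and essentially identical to the paper's Step~1.

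The gap is in your sentence ``Since the model is nonparametric, the tangent space coincides with the full space $L^2_0(P)$, so \ldots\ such a $\phi$ is automatically the EIF.'' The model here is \emph{not} nonparametric in the relevant sense: the data-generating process imposes the structural constraints that $Y^{(1)}$ and $Y^{(2)}$ are conditionally i.i.d.\ given $X$ (both drawn from the \emph{same} $\pi_{\text{ref}}$) and that $g^*(x,y,y')+g^*(x,y',y)=1$. These constraints make the tangent space a \emph{proper} subspace of $L^2_0(P)$; for instance the $(Y^{(1)},Y^{(2)})$-component of the tangent space consists only of additive scores of the form $q(Y^{(1)},X)+q(Y^{(2)},X)$ with $\mathbb{E}[q(Y,X)\mid X]=0$, not arbitrary mean-zero functions of $(Y^{(1)},Y^{(2)},X)$. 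Consequently, satisfying the pathwise-derivative identity for all submodel scores only shows that $\phi=\psi-p^*(\pi)$ is \emph{a} gradient (an influence function); it does not by itself show that $\phi$ is the \emph{efficient} one. You still need the paper's Step~2: explicitly decompose $\psi-p^*(\pi)$ as $\tfrac12(\psi_1+\psi_2+\psi_3)$ with $\psi_1\in\mathcal{T}_g$, $\psi_2\in\mathcal{T}_\pi$, $\psi_3\in\mathcal{T}_f$, thereby verifying membership in the tangent space and hence efficiency. Without this, the lemma's conclusion (that $\psi-p^*(\pi)$ is the EIF rather than merely an IF) is not established.
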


\begin{proof}[Proof of Lemma \ref{lem:EIF}]
    To simplify notation, we denote $\psi(W) = \psi(X,Y^{(1)},Y^{(2)},Z;\pi,\pi_{\textup{ref}},g^*)$. Let $\mathcal{M}$ denote the model that generates these data triplets, which are i.i.d. copies of $W=(Z,Y^{(1)},Y^{(2)},X)$. This model involves three types of parameters: (i) those to model the probability mass function $f_X(\bullet)$ of the prompt $X$ (denoted by $\gamma)$; (ii) those to model the reference policy which generates response $Y^{(1)}, Y^{(2)}$ independently conditional on the prompt $X$ (denoted by $b$) and (iii) those to model the preference probability $g^*$ which characterize the probability of $Y^{(1)}$ is preferred than $Y^{(2)}$ given $X$ (denoted by $\eta$). Then the likelihood function for a data tuple $W$ is given by
\begin{equation}\label{eq:likelihood-of-submodel}
    l(W;\gamma,b,\eta) = f_\gamma(X)\pi_b(Y^{(1)}|X)\pi_b(Y^{(2)}|X)g_\eta(Y^{(1)},Y^{(2)},X)^Z(1-g_\eta(Y^{(1)},Y^{(2)},X))^{1-Z}.
\end{equation}
Additionally, let $(\gamma_0, b_0, \eta_0)$ denote the true parameters in the model so that $f_{\gamma_0}=f_X, \pi_{b_0} = \pi_{\text{ref}}$ and $g_{\eta_0} = g^*$.

The proof follows from standard techniques in semi-parametric statistic; see e.g., Chapters 2 \& 3 in \citet{bickel1998efficient}  and Theorem 3.5 in \citet{Tsiatis_2006_Semiparametric}. See also the proof of Theorem 1 in \cite{kallus2020double}. Specifically: 
\begin{enumerate}[leftmargin=*]
    \item For any given policy $\pi$, we first prove that $\mathbb{E}[\{\psi(W) - p^*(\pi)\} \nabla \log l(W; \gamma_0,b_0,\eta_0)]$ is a valid derivative of $p^*(\pi)$ with respect to the parameters $(\gamma_0, b_0, \eta_0)$, where $\nabla$ denotes the gradient operator. 
    \item We next prove that $\psi(W)-p^*(\pi)$ lies in the tangent space of the data generating process model $\mathcal{M}$ (denoted by $\mathcal{T}_{\mathcal{M}}$), that is, $\psi(W)-p^*(\pi) \in \mathcal{T}_{\mathcal{M}}$.
\end{enumerate}

\textbf{Step 1: $\mathbb{E}[\{\psi(W) - p^*(\pi)\} \nabla \log l(W; \gamma_0,b_0,\eta_0)]$ is a valid derivative of $p^*(\pi)$ with respect to $(\gamma_0, b_0, \eta_0)$}.\\
Noted that the log-likelihood has zero mean. Therefore, in order to prove step 1, we only need to verify the following three equations hold.
\begin{itemize}
    \item[(i)] $\mathbb{E}\left\{\psi(W)\frac{\partial}{\partial \gamma}\log l(W;\gamma_0,b_0,\eta_0)\right\} = \frac{\partial}{\partial\gamma}p^*(\pi)|_{\gamma=\gamma_0}$,
    \item[(ii)] $\mathbb{E}\left\{\psi(W)\frac{\partial}{\partial b}\log l(W;\gamma_0,b_0,\eta_0)\right\} = \frac{\partial}{\partial b}p^*(\pi)|_{b=b_0}$,
    \item [(iii)] $\mathbb{E}\left\{\psi(W)\frac{\partial}{\partial \eta}\log l(W;\gamma_0,b_0,\eta_0)\right\} = \frac{\partial}{\partial \eta}p^*(\pi)_{\eta=\eta_0}$.
\end{itemize}
By definition, $p^*(\pi)$ can be represented as
\begin{eqnarray}
    p^*(\pi) &=& \mathbb{E} [\mathbb{E}_{y_1\sim\pi_\theta, y_2\sim\pi_{\text{ref}}} \mathbb{P}(y_1\succ y_2 |X)]\nonumber\\
    &=&\sum_{x,y_1,y_2} g^*(y_1,y_2,x)\pi(y_1|x)\pi_{\textup{ref}}(y_2|x)f_X(x). \nonumber
\end{eqnarray}
Let $w=(x,y_1,y_2,z)$ denote the realization of $W=(X,Y^{(1)},Y^{(2)},Z)$. It follows from equation \eqref{eq:likelihood-of-submodel} that 
\begin{eqnarray}\label{eq:log-likelihood-of-submodel}
    \log l(w;\gamma,b,\eta) &=& \log f_\gamma(x) + \log \pi_b(y_1|x) + \log\pi_b(y_2|x) \nonumber\\
     &&\qquad +z\log g_\eta(y_1,y_2,x) + (1-z)\log(1-g_\eta(y_1,y_2,x)).
\end{eqnarray}
With some calculations, we obtain
\begin{eqnarray}
    \frac{\partial}{\partial \gamma}\log l(w;\gamma_0,b_0,\eta_0) &=& \frac{1}{f_X(x)} \frac{\partial}{\partial \gamma}f_\gamma(x)\Big|_{\gamma=\gamma_0},\nonumber\\
     \frac{\partial}{\partial b}\log l(w;\gamma_0,b_0,\eta_0) &=& \frac{1}{\pi_{\text{ref}}(y_1|x)}\frac{\partial}{\partial b}\pi_b(y_1|x)\Big|_{b=b_0} + \frac{1}{\pi_{\text{ref}}(y_2|x)}\frac{\partial}{\partial b}\pi_b(y_2|x)\Big|_{b=b_0},\nonumber\\
    \frac{\partial}{\partial \eta}\log l(w;\gamma_0,b_0,\eta_0) &=& \left(\frac{z}{g^*(y_1,y_2,x)} - \frac{1-z}{1-g^*(y_1,y_2,x)} \right)\frac{\partial}{\partial \eta}g_\eta(y_1,y_2,x)\Big|_{\eta=\eta_0}.\nonumber
\end{eqnarray}
In the following proof, we omit $|_{\gamma=\gamma_0}, |_{b=b_0}$ and $|_{\eta=\eta_0}$ to ease notation.

\textbf{For equation (i)}: Let $\text{Ber}(p)$ denote the Bernoulli distribution with success probability $p$. The left-hand-side (LHS) of equation (i) can be represented by
\begin{eqnarray}
    &&\mathbb{E}\left\{\psi(W)\frac{\partial}{\partial \gamma}\log l(W;\gamma_0,b_0,\eta_0)\right\}\nonumber\\
    &=& \frac{1}{2}\sum_{x,y_1,y_2}\mathbb{E}_{z\sim \text{Ber}(g^*(y_1,y_2,x))} \bigg\{\left( \frac{\pi(y_1|x)}{\pi_{\text{ref}}(y_1|x)} -  \frac{\pi(y_2|x)}{\pi_{\text{ref}}(y_2|x)}\right)(z-g^*(y_1,y_2,x))\nonumber\\
    &&\qquad\qquad \times\pi_{\text{ref}}(y_1|x)\pi_{\text{ref}}(y_2|x)\frac{\partial}{\partial\gamma}f_\gamma(x)\bigg\}\nonumber\\
    &&\qquad +\frac{1}{2}\sum_{x,y_1,y_2,y^*}\left(g^*(y^*,y_1,x)+g^*(y^*,y_2,x)\right)\pi(y^*|x)\pi_{\text{ref}}(y_1|x)\pi_{\text{ref}}(y_2|x)\frac{\partial}{\partial\gamma}f_\gamma(x) \nonumber
\end{eqnarray}
Using the fact that $\mathbb{E}_{z\sim \text{Ber}(g^*(y_1,y_2,x))} \left\{z-g^*(y_1,y_2,x)\right\}=0$, the first term on the right-hand-side (RHS) of the above equation
vanishes. Therefore,
\begin{align}
    \mathbb{E}\left\{\psi(W)\frac{\partial}{\partial \gamma}\log l(W;\gamma_0,b_0,\eta_0)\right\}
    =& \frac{1}{2}\sum_{x,y_1,y^*} g^*(y^*, y_1,x)\pi(y^*|x)\pi_{\text{ref}}(y_1|x)\frac{\partial}{\partial\gamma}f_{\gamma_0}(x) \nonumber\\
    &\quad+\frac{1}{2}\sum_{x,y_2,y^*} g^*(y^*, y_2,x)\pi(y^*|x)\pi_{\text{ref}}(y_2|x)\frac{\partial}{\partial\gamma}f_{\gamma_0}(x)  \nonumber\\
    =& \sum_{x,y,y^*} g^*(y^*, y,x)\pi(y^*|x)\pi_{\text{ref}}(y|x)\frac{\partial}{\partial\gamma}f_{\gamma_0}(x)  \nonumber\\
    =& \frac{\partial}{\partial\gamma} p^*(\pi). \nonumber
\end{align}
\textbf{For equation (ii)}: Notice that the LHS of equation (ii) can be represented as
\begin{eqnarray}
    &&\mathbb{E}\left\{\psi(W)\frac{\partial}{\partial b}\log l(W;\gamma_0,b_0,\eta_0)\right\}\nonumber\\
    &=& \frac{1}{2}\sum_{x,y_1,y_2}\mathbb{E}_{z\sim \text{Ber}(g^*(y_1,y_2,x))}\Bigg\{\left(\frac{\pi(y_1|x)}{\pi_{\text{ref}}(y_1|x)} -  \frac{\pi(y_2|x)}{\pi_{\text{ref}}(y_2|x)}\right)\bigg(\frac{1}{\pi_{\text{ref}}(y_1|x)}\frac{\partial}{\partial b}\pi_b(y_1|x) + \nonumber\\
    &&\qquad\qquad\frac{1}{\pi_{\text{ref}}(y_2|x)}\frac{\partial}{\partial b}\pi_b(y_2|x)\bigg)  \times(z-g^*(y_1,y_2,x))\pi_{\text{ref}}(y_1|x)\pi_{\text{ref}}(y_2|x)f_X(x)\Bigg\}\nonumber\\
    && +\frac{1}{2}\sum_{x,y_1,y_2,y^*}\left(g^*(y^*,y_1,x)+g^*(y^*,y_2,x)\right)\pi(y^*|x)\frac{\partial}{\partial b}[\pi_{b_0}(y_1|x)\pi_{b_0}(y_2|x)]f_X(x) \nonumber.
\end{eqnarray}
Follows a similar argument in proving equation (i), the first term on the RHS equals zero. The second term can be further represented by
\begin{eqnarray}
    &&\frac{1}{2}\frac{\partial}{\partial b}\sum_{x,y_1,y_2,y^*} \left(g^*(y^*,y_1,x)+g^*(y^*,y_2,x)\right)\pi(y^*|x)\pi_{b_0}(y_1|x)\pi_{b_0}(y_2|x)f_X(x)  \nonumber\\
    &=& \frac{1}{2}\frac{\partial}{\partial b}\sum_{x,y_1,y^*} g^*(y^*,y_1,x)\pi(y^*|x)\pi_{b_0}(y_1|x)f_X(x)\nonumber\\
    &&\qquad + \frac{1}{2}\frac{\partial}{\partial b}\sum_{x,y^*,y_2 } g^*(y^*,y_2,x)\pi(y^*|x)\pi_{b_0}(y_2|x)f_X(x)\nonumber\\
    &=&\sum_{x,y,y^*} g^*(y^*,y,x)\pi(y^*|x)\frac{\partial}{\partial b}\pi_{b_0}(y|x)f_X(x)\nonumber\\
    &=& \frac{\partial}{\partial b}p^*(\pi).\nonumber
\end{eqnarray}
This finishes the proof of equation (ii).

\textbf{For equation (iii)}: Its LHS can be represented as
\begin{eqnarray}
    &&\mathbb{E}\left\{\psi(w)\frac{\partial}{\partial \eta}\log l(w;\gamma_0,b_0,\eta_0)\right\}\nonumber\\
    &=& \frac{1}{2}\sum_{x,y_1,y_2}\mathbb{E}_{z\sim \text{Ber}(g^*(y_1,y_2,x))}\Bigg\{\left( \frac{\pi(y_1|x)}{\pi_{\text{ref}}(y_1|x)} -  \frac{\pi(y_2|x)}{\pi_{\text{ref}}(y_2|x)}\right)(z-g^*(y_1,y_2,x))\nonumber\\
    &&\qquad\times\pi_{\text{ref}}(y_1|x)\pi_{\text{ref}}(y_2|x)\left(\frac{z}{g^*(y_1,y_2,x)} -\frac{1-z}{1-g^*(y_1,y_2,x)}\right)\frac{\partial}{\partial\eta}g_\eta(y_1,y_2,x)f_X(x)\Bigg\}\nonumber\\
    &&+\frac{1}{2}\sum_{x,y_1,y_2,y^*}\mathbb{E}_{z\sim \text{Ber}(g^*(y_1,y_2,x))}\Bigg\{\left(g^*(y^*,y_1,x)+g^*(y^*,y_2,x)\right)\pi(y^*|x)\pi_{\text{ref}}(y_1|x) \nonumber\\
    &&\qquad\times \pi_{\text{ref}}(y_2|x)f_X(x)\left(\frac{z}{g^*(y_1,y_2,x)} -\frac{1-z}{1-g^*(y_1,y_2,x)}\right)\frac{\partial}{\partial\eta}g_\eta(y_1,y_2,x) \nonumber.
\end{eqnarray}
The second term is equal to zero due to the fact that 
\[
\mathbb{E}_{z\sim \text{Ber}(g^*(y_1,y_2,x))}\left\{ \frac{z}{g^*(y_1,y_2,x)} -\frac{1-z}{1-g^*(y_1,y_2,x)} \right\}=0.
\]
On the other hand, since
\begin{eqnarray}
    &&\mathbb{E}_{z\sim \text{Ber}(g^*(y_1,y_2,x))} \left\{(z-g^*(y_1,y_2,x))\left(\frac{z}{g^*(y_1,y_2,x)} -\frac{1-z}{1-g^*(y_1,y_2,x)}\right) \right\} \nonumber\\
    &=&g^*(y_1,y_2,x) \times (1-g^*(y_1,y_2,x))\frac{1}{g^*(y_1,y_2,x)} \nonumber\\
    &&\qquad +(1-g^*(y_1,y_2,x))\times (-g^*(y_1,y_2,x))\frac{-1}{1-g^*(y_1,y_2,x)}\nonumber\\
    &=& 1,\nonumber
\end{eqnarray}
the LHS in equation (iii) can be further represented by
\begin{eqnarray}
    &&\frac{1}{2}\sum_{x,y_1,y_2} \left( \frac{\pi(y_1|x)}{\pi_{\text{ref}}(y_1|x)} -  \frac{\pi(y_2|x)}{\pi_{\text{ref}}(y_2|x)}\right)\pi_{\text{ref}}(y_1|x)\pi_{\text{ref}}(y_2|x)\frac{\partial}{\partial\eta}g_\eta(y_1,y_2,x)f_X(x)\nonumber\\
    &=& \frac{1}{2}\sum_{x,y_1,y_2} \left( \pi(y_1|x)\pi_{\text{ref}}(y_2|x) - \pi(y_2|x)\pi_{\text{ref}}(y_1|x) \right)\frac{\partial}{\partial\eta}g_\eta(y_1,y_2,x)f_X(x)\nonumber\\
    &=& \sum_{x,y_1,y_2} \pi(y_1|x)\pi_{\text{ref}}(y_2|x)\frac{\partial}{\partial\eta}g_\eta(y_1,y_2,x)f_X(x)\nonumber\\
    &=& \frac{\partial}{\partial\eta} p^*(\pi)
\end{eqnarray}
where the second-to-last equality follows from the fact $\frac{\partial}{\partial\eta}g_\eta(y_1,y_2,x) = - \frac{\partial}{\partial\eta}g_\eta(y_2,y_1,x)$. This finishes the proof of equation (iii).

Thus, with equation (i) - (iii) verified, Step 1 is proven.

\textbf{Step 2: $\psi(W)-p^*(\pi)$ lies in the tangent space $\mathcal{T}_\mathcal{M}$}.\\
By definition, the tangent space $\mathcal{T}_\mathcal{M}$ is the linear closure of the set of score functions of the all one-dimensional submodels regarding $\mathcal{M}$ that pass through true parameter; see Definition 2 in \cite{kallus2020double}.
Based on the likelihood function in equation \eqref{eq:log-likelihood-of-submodel}, we can explicitly calculate the tangent space of the data generating process model $\mathcal{M}$. In fact, the tangent space $\mathcal{T}_\mathcal{M}$ is a product space, which can be represented as $\mathcal{T}_f\bigoplus\mathcal{T}_{\pi}\bigoplus\mathcal{T}_g$, with $\mathcal{T}_f, \mathcal{T}_{\pi}, \mathcal{T}_{g}$ being the sets of score functions of all one-dimensional submodels passing through the marginal distribution $f_X(x)$, conditional distribution $\pi_{\text{ref}}$ and preference probability $g^*$. Take the calculation of $\mathcal{T}_f$ as an example. Consider a one-dimensional submodel $\{f_{\varepsilon}(x)\}$, defined as
\begin{equation*}
    f_{\varepsilon}(x) = f_X(x)(1 + \varepsilon q(x)),
\end{equation*}
where $q(x)$ satisfies $\sum_x f(x) q^2(x) <\infty $. Since we require $f_\varepsilon$ to be a valid probability mass function, it must satisfy $\sum_x f_\varepsilon(x) = 1$, which indicates $\mathbb{E}q(X) = 0$. Then the score function with respect to $\varepsilon$ is given by 
\begin{equation*}
    \frac{d}{d\varepsilon} \log f_\varepsilon(x) = q(x).
\end{equation*}
Therefore, the tangent space for the marginal distribution function $f(x)$ can be represented as 
\begin{equation*}
    \mathcal{T}_f = \left\{ q(x): \mathbb{E}[q(X)] = 0, \sum_x f(x)q^2(x)<\infty  \right\}.
\end{equation*}
Meanwhile,  consider a one-dimensional submodel 
$$\pi_{\varepsilon}(y|x) = \pi_{\text{ref}}(y|x)(1+\varepsilon q(y,x)),$$
where $q(y,x)$ satisfies $\sum_x q^2(x,y)\pi_{\text{ref}}(y|x)<\infty$. Since we require $\pi_\varepsilon(y|x)$ be a valid conditional probability mass function, it must satisfy $\sum_y \pi_\varepsilon(y|x) = 1$ for any $x$, which indicates $\mathbb{E}_{y\sim\pi_{\text{ref}}} q(y|x) = 0$ for all $x$. Then the score function with respect to $\varepsilon$ is given by
\begin{equation*}
    \frac{d}{d\varepsilon}\Big|_{\varepsilon = 0}\log \pi_\epsilon(y_1|x)\pi_{\epsilon}(y_2|x) = q(y_1,x) + q(y_2,x).
\end{equation*}
Therefore, the tangent space for the reference policy $\pi_\text{ref}$ can be represented as
\begin{equation}
    \mathcal{T}_{\pi} = \left\{ q(y_1,x) + q(y_2,x): \mathbb{E}_{y\sim\pi_{\text{ref}}}[q(y,x)|X=x] = 0, \sum_y \pi_{\text{ref}}(y|x)q^2(y,x)<\infty \right\}.\nonumber
\end{equation}
Following similar arguments, we can obtain
\begin{eqnarray}
    \mathcal{T}_g &=& \left\{\frac{z-g^*(y_1,y_2,x)}{g^*(1-g^*)}q(y_1,y_2,x) : \sum_{x,y_1,y_2}q^2(x,y_1,y_2)f(x)\pi_{\text{ref}}(y_1|x)\pi_{\text{ref}}(y_2|x)<\infty  \right\}. \nonumber
\end{eqnarray}
To verify $\psi(W)-p^*(\pi)$ lies in the tangent space, consider the following three functions:
\begin{eqnarray}
    \psi_1(w) &:=& \left( \frac{\pi(y_1|x)}{\pi_{\text{ref}}(y_1|x)} -  \frac{\pi(y_2|x)}{\pi_{\text{ref}}(y_2|x)}\right)(z-g^*(y_1,y_2,x))\pi_{\text{ref}}(y_1|x)\pi_{\text{ref}}(y_2|x)f_X(x)\nonumber\\
    & = & \frac{z-g^*(y_1,y_2,x)}{g^*(1-g^*)}g^*(1-g^*)\left(\pi(y_1|x)\pi_{\text{ref}}(y_2|x) - \pi(y_2|x)\pi_{\text{ref}}(y_1|x)\right)f_X(x),\nonumber\\
    \psi_2(y_1,y_2,x) &:=& \mathbb{E}_{y^*\sim\pi}\left\{g(y^*, y_1,x) +g(y^*, y_2,x)\right\} - 2\mathbb{E}_{\substack{y\sim \pi_{\textup{ref}(\bullet|x)}\\y^*\sim \pi(\bullet|x)}}\left\{ g(y^*,y,x) \right\}, \nonumber\\
    \psi_3(x) &:=& 2\mathbb{E}_{\substack{y\sim \pi_{\textup{ref}(\bullet|x)}\\y^*\sim \pi(\bullet|x)}}\left\{ g(y^*,y,x) \right\} - 2 p^*(\pi). \nonumber
\end{eqnarray}
It is easy to verify that $\psi_1(W)\in\mathcal{T}_g$, $\psi_2(Y^{(1)},Y^{(2)},X)\in\mathcal{T}_\pi$ and $\psi_3(X) \in \mathcal{T}_f$. Therefore, 
\[
\psi(W) - p^*(\pi) = \frac{1}{2}\left(\psi_1(W) + \psi_2(Y^{(1)},Y^{(2)},X) + \psi_3(X)\right)\in\mathcal{T}_\mathcal{M}.
\]
This finishes the proof of Step 2.

With Step 1 and Step 2 verified, together with the fact that $\mathbb{E}\psi(W) = p^*(\pi)$, we obtain that $\psi(W)$ is an efficient influence function.
\end{proof}

\subsection{Proof of Theorem \ref{thm:mse}} \label{App: MSE proof}
Let $\mathbb{E}_n$ denote the empirical average over the $n$ tuples $(X,Y^{(1)}, Y^{(2)}, Z)$ in the dataset $\mathcal{D}$. Accordingly, our estimator for $p^*(\pi)$ can be represented by $\mathbb{E}_n [\psi(w;\pi,\widehat{\pi}_{\text{ref}}, \widehat{g})]$. 

We further define the following norms:
\begin{eqnarray*}
    \Vert \widehat{g} - g^*\Vert &=& \left(\mathbb{E}\left[\widehat{g}(Y^{(1)}, Y^{(2)}, X) - g^*(Y^{(1)}, Y^{(2)}, X)\right]^2\right)^{1/2}\nonumber\\
    \left\Vert \frac{\widehat{\pi}_{\text{ref}}}{\pi_{\text{ref}}}-1\right\Vert &=& \left( \mathbb{E} \left[\frac{\widehat{\pi}_{\text{ref}}(Y^{(1)}|X)}{\pi_{\text{ref}}(Y^{(1)}|X)} -1\right]^{2}\right)^{1/2}. 
\end{eqnarray*}
In the proof of this theorem, we assume these norms are bounded. Such a boundedness assumption is automatically satisfied for $\|\widehat{g} - g^*\|$, since both $g^*$ and $\widehat{g}$ are probabilities. These assumptions are to simplify our finite-sample error bound by omitting some
higher-order remainder terms, which can be more heavily dependent on the aforementioned norms. 

 With some calculations, we can show that 
\[
\mathbb{E}_n \psi(w;\pi,\widehat{\pi}_{\text{ref}}, \widehat{g})  = \mathbb{E}_n \psi(w;\pi,\pi_{\text{ref}}, g^*) + \mathrm{I} + \mathrm{II} + \mathrm{III},
\]
where
\begin{eqnarray}
    \mathrm{I} &=& \frac{1}{2}\mathbb{E}_n\left\{ \sum_{a=1}^2(-1)^a(Z-g^*(X,Y^{(1)},Y^{(2)}))\left[\frac{\pi(Y^{(a)}|X)}{\widehat{\pi}_{\text{ref}}(Y^{(a)}|X)}- \frac{\pi(Y^{(a)}|X)}{\pi_\text{ref}(Y^{(a)}|X)}\right]  \right\},\nonumber\\
    \mathrm{II} &=& \frac{1}{2}\mathbb{E}_n\left\{ \sum_{a=1}^2 \mathbb{E}_{y\sim\pi(\bullet|x)}\left[\widehat{g}(X,y,Y^{(a)}) - g^*(X,y,Y^{(a)})\right]  \right\} \nonumber\\
    &&\qquad + \frac{1}{2}\mathbb{E}_n\left\{ \sum_{a=1}^2  (-1)^a \frac{\pi(Y^{(a)}|X)}{\pi_{\text{ref}}(Y^{(a)}|X)}[\widehat{g}(X,Y^{(1)},Y^{(2)}) - g^*(X,Y^{(1)},Y^{(2)})] \right\},\nonumber\\
    \mathrm{III} &=& \frac{1}{2}\mathbb{E}_n\left\{\sum_{a=1}^2(-1)^{a} [\widehat{g}(X,Y^{(1)},Y^{(2)})-g^*(X,Y^{(1)},Y^{(2)})]\left[\frac{\pi(Y^{(a)}|X)}{\widehat{\pi}_{\text{ref}}(Y^{(a)}|X)}- \frac{\pi(Y^{(a)}|X)}{\pi_\text{ref}(Y^{(a)}|X)}\right]   \right\} .\nonumber
\end{eqnarray}
From Lemma \ref{lem:EIF}, we know that $\mathbb{E}_n \psi(w;\pi,\pi_{\text{ref}}, g^*)$ is an unbiased estimator for $p^*(\pi)$ with variance equal to $\text{SEB}$. Since both $\widehat{\pi}_{\text{ref}}$ and $\widehat{g}$ are obtained from external models independent of $\mathcal{D}$, analogous to the proof of Lemma \ref{lemma1}, we know that the first term $\mathrm{I}$ and the second term $\mathrm{II}$ have zero means.  The third term $\mathrm{III}$ is the bias term. Therefore, we obtain the following bias-variance decomposition for $\text{MSE}(\widehat{p}_{\text{DR}})$:
\begin{equation}\label{eqn:bias-variance-decomp}
    \textup{MSE}(\widehat{p}_{\textup{DR}}(\pi)) = \textup{Var}(\mathbb{E}_n \psi(w;\pi,\pi_{\text{ref}}, g^*) + \mathrm{I} + \mathrm{II} + \mathrm{III}) + (\mathbb{E}[\mathrm{III}])^2
\end{equation}
Since $g^*$ is bounded by $1$, under the coverage assumption (Assumption \ref{assump:Coverage}), we obtain that
\begin{eqnarray}\label{eqn:order of SEB}
\begin{split}
    \textup{Var}(\mathbb{E}_n \psi(w;\pi,\pi_{\text{ref}}, g^*) = \frac{1}{n}\textup{Var}( \psi(w;\pi,\pi_{\text{ref}}, g^*))=O\Big(\frac{1}{n}\mathbb{E}\frac{\pi^2(Y|X)}{\pi_{\textrm{ref}}^2(Y|X)}\Big)\\
    =O\Big(\frac{1}{n}\sum_y \mathbb{E} \frac{\pi^2(y|X)}{\pi_{\textrm{ref}}(y|X)}\Big)= O\left(\frac{1}{n\epsilon}\right).
\end{split}
\end{eqnarray}
Similarly, we have
\begin{eqnarray}\label{eqn:EI^2}
\begin{split}
   \displaystyle \mathbb{E}{\mathrm{I}^2} \le &\frac{1}{n}\mathbb{E}\left\{\left[\frac{\pi(Y|X)}{\widehat{\pi}_{\text{ref}}(Y|X)}- \frac{\pi(Y|X)}{\pi_\text{ref}(Y|X)}\right]^2\right\}\\
    \displaystyle\leq& \frac{1}{n}\mathbb{E}\left\{\frac{\pi^2(Y|X)}{\widehat{\pi}_{\text{ref}}^2(Y|X)}\left[\frac{\widehat{\pi}_{\text{ref}}(Y|X)}{\pi_\text{ref}(Y|X)}- 1\right]^2\right\}\\
    \displaystyle =& O\left(\frac{1}{n\epsilon^2}\left\|\frac{\widehat{\pi}_{\text{ref}}}{\pi_\text{ref}} - 1 \right\|^2\right),
\end{split}
\end{eqnarray}
and
\begin{eqnarray}\label{eqn:EII^2}
    \mathbb{E}{\mathrm{II}^2} &=& O\left(\frac{1}{n\epsilon^2}\left\|\widehat{g} - g^* \right\|^2\right),\qquad \mathbb{E}{\mathrm{III}^2}=O\left(\frac{1}{n\epsilon^2}\left\|\frac{\widehat{\pi}_{\text{ref}}}{\pi_\text{ref}}- 1 \right\|^2\right).
\end{eqnarray}
By Cauchy-Schwarz inequality, we have for any random variables $U$ and $V$ that $|\textup{Cov}(U,V)|\leq \sqrt{\textup{Var}(U)\textup{Var}(V)}$. It follows that
\begin{eqnarray}\label{eqn:cross-term}
\begin{split}
    \textup{Cov}\left(\mathbb{E}_n \psi(w;\pi,\pi_{\text{ref}}, g^*), \mathrm{I}+\mathrm{III}\right) =& O\left(\frac{1}{n\epsilon^{3/2}}\left\|\frac{\widehat{\pi}_{\text{ref}}}{\pi_\text{ref}}- 1 \right\|\right),\\
    \textup{Cov}\left(\mathbb{E}_n \psi(w;\pi,\pi_{\text{ref}}, g^*), \mathrm{II}\right) =& O\left(\frac{1}{n\epsilon^{3/2}}\left\|\widehat{g} - g^* \right\|\right),\\
    \textup{Cov}\left(\mathrm{I}+\mathrm{III}, \mathrm{II}\right) =& O\left(\frac{1}{n\epsilon^2}\left\|\widehat{g} - g^* \right\|\cdot\left\|\frac{\widehat{\pi}_{\text{ref}}}{\pi_\text{ref}} - 1 \right\|\right).
\end{split}
\end{eqnarray}
Since $\epsilon$ is a constant, the high-order terms $\textrm{Var}(\mathrm{I})$, $\textrm{Var}(\mathrm{I})$ and $\textrm{Var}(\mathrm{III})$ are dominated by the first two terms in \eqref{eqn:cross-term}. Combining equations \eqref{eqn:order of SEB}, \eqref{eqn:EI^2},\eqref{eqn:EII^2} with \eqref{eqn:cross-term} yields
\begin{eqnarray}
    \textup{Var}(\mathbb{E}_n \psi(w;\pi,\pi_{\text{ref}}, g^*) + \mathrm{I} + \mathrm{II}+\mathrm{III}) = \textup{SEB}+ O\left(\frac{1}{n\epsilon^{3/2}}\left\|\widehat{g} - g^* \right\|\right) + O\left(\frac{1}{n\epsilon^{3/2}}\left\|\frac{\widehat{\pi}_{\text{ref}}}{\pi_\text{ref}} - 1 \right\|\right).
\end{eqnarray}


Finally, using Cauchy-Schwarz inequality again, we obtain that
\begin{eqnarray}\label{eqn:p_hat_variance}
    \mathbb{E}\big| \mathrm{III} \big| &=& O\left( \mathbb{E}\left\{ (\widehat{g}-g^*)^2(X,Y^{(1)},Y^{(2)})  \right\}^{1/2}\mathbb{E}\left\{ \left[\frac{ \pi_{\textup{ref}}(Y|X)}{\widehat\pi_{\textup{ref}}^2(Y|X)}-1\right]^2 \frac{\pi^2(Y|X)}{\pi_{\textup{ref}}^2(Y|X)}\right\}^{1/2}\right)\nonumber\\
    &=& O\left( \frac{1}{\epsilon}\Vert \widehat{g} - g^*\Vert \cdot \Vert \widehat{\pi}_{\textup{ref}}/\pi_{\textup{ref}}  -1 \Vert\right). \nonumber
\end{eqnarray}
Combining \eqref{eqn:bias-variance-decomp} with \eqref{eqn:p_hat_variance}, we obtain that
\begin{eqnarray}
    \textup{MSE}(\widehat{p}_{\textup{DR}}(\pi)) &=& \mathbb{E}\left\{\mathbb{E}_n \psi(w;\pi,\widehat{\pi}_{\text{ref}}, \widehat{g}) - p^*(\pi)\right\}^2\nonumber\\
    &=&  \textrm{SEB} + O\left(\frac{1}{n\epsilon^{3/2}}\|\widehat{g} - g^*\|\right) +  O\left(\frac{1}{n\epsilon^{3/2}}\| \widehat{\pi}_{\textup{ref}}/\pi_{\textup{ref}}  -1\|\right) \nonumber\\
    &&\qquad +  O\left(\frac{1}{\epsilon^2}\| \widehat{\pi}_{\textup{ref}}/\pi_{\textup{ref}}  -1\|^2\cdot\| \widehat{g} - g^*\|^2\right). \nonumber
\end{eqnarray}
This finishes the proof of Theorem \ref{thm:mse}.

\subsection{Proofs of Corollaries \ref{thm:doubly-robust property} and \ref{thm:semi-parametric efficiency}}
The proofs of Corollaries \ref{thm:doubly-robust property} and \ref{thm:semi-parametric efficiency} follow directly from the assertion of Theorem \ref{thm:mse}. 

\subsection{Proof of Theorem \ref{thm:totalpreference}}
Let $\pi^*$ denote the maximizer of $p^*(\pi)$ in the policy class $\Pi$. Throughout the proof, for any policies $\pi_1$ and $\pi_2$, we use a shorthand and write $\mathbb{E}_{X\sim \mathcal{D}} {D}_{\mathrm{KL}}[ \pi_1(\bullet \mid X) \,\|\, \pi_{2}(\bullet \mid X) ]$ as $\text{KL}(\pi_1\Vert\pi_2)$. Since  $\widehat{\pi}$ is a maximizer of $\widehat{p}_{\textup{DR}}(\pi) - \beta\text{KL}(\pi\Vert\widehat{\pi}_\text{ref})$, we have 
\[
\widehat{p}_{\textup{DR}}(\widehat{\pi}) - \beta\text{KL}(\widehat{\pi}\Vert\widehat{\pi}_\text{ref}) \geq \widehat{p}_{\textup{DR}}(\pi^*) - \beta\text{KL}(\pi^*\Vert\widehat{\pi}_\text{ref}).
\]
It directly follows that
\begin{eqnarray}\label{eq:regret-decomp}
    && p^*(\pi^*) - p^*(\widehat\pi)\nonumber\\
    &\leq & p^*(\pi^*) - \widehat{p}_{\textup{DR}}(\pi^*) + \widehat{p}_{\textup{DR}}(\widehat{\pi}) - p^*(\widehat\pi) + \beta(\text{KL}(\pi^*\Vert\widehat{\pi}_{\text{ref}}) - \text{KL}(\widehat\pi\Vert\widehat{\pi}_{\text{ref}}))\nonumber\\
    &\leq& \mathbb{E}\left|p^*(\pi^*) - \widehat{p}_{\textup{DR}}(\pi^*)\right| + \mathbb{E}\left|\widehat{p}_{\textup{DR}}(\widehat{\pi}) - p^*(\widehat\pi)\right|+ O\left(\beta\log^{-1}\epsilon\right) \nonumber\\
    &\leq& 2\mathbb{E}\sup_{\pi\in\Pi} |p^*(\pi) - \widehat{p}_{\textup{DR}}(\pi)| + O\left(\beta\log^{-1}\epsilon\right),
\end{eqnarray}
where the second inequality follows from the coverage assumption, which entails that $$\text{KL}(\pi\Vert \widehat{\pi}_{\text{ref}}) = \mathbb{E}_{X\sim \mathcal{D}}\mathbb{E}_{y \sim \pi(\bullet|X)} \log \frac{\pi(y|X)}{\widehat{\pi}_{\textup{ref}}(y|X)} = O(\log^{-1} \epsilon).$$

Additionally, following the proof of Theorem \ref{thm:mse}, the bias of the proposed preference evaluation estimator can be upper bounded by 
\begin{align}\label{eq:sup-decomp}
    \sup_{\pi \in \Pi}|\mathbb{E}[p^*(\pi) - \widehat{p}_{\textup{DR}}(\pi)]| = \mathbb{E}| \mathbb{E}_n\psi(w;\pi,\pi_{\textup{ref}},g^*) - p^*(\pi)| + O\left(\frac{1}{\epsilon}\Vert \widehat{g} - g^*\Vert \cdot \Vert \frac{\widehat{\pi}_{\textup{ref}}}{\pi_{\textup{ref}}}  - 1 \Vert\right).
\end{align}
It remains to upper bound the empirical process term $\mathbb{E}\sup_{\pi\in\Pi} | \widehat{p}_{\textup{DR}}(\pi)-\mathbb{E}\widehat{p}_{\textup{DR}}(\pi)|$. Toward that end, we employ Corollary 5.1 in \cite{chernozhukov2014gaussian}. To invoke this corollary, notice that
\begin{enumerate}[leftmargin=*]
    \item According to Assumption \ref{assump:VC}, $\Pi$ is a policy class with VC dimension $v$. Under Assumption \ref{assump:Coverage}, it follows from Lemma A.6 in \cite{chernozhukov2014gaussian} that the function class $\mathcal{F} = \{ \psi(\bullet,\pi,\widehat{\pi}_{\textup{ref}}, \widehat{g}) | \pi\in \Pi \}$ also has a VC dimension of $v$.
    \item Using the coverage assumption again, the function class $\mathcal{F}$ is uniformly bounded by $O(1/\epsilon)$. 
    \item The variance $\sup_{f\in \mathcal{F}} \textrm{Var}(f(W))$ is uniformly bounded by $O(1/\epsilon^2)$.
\end{enumerate}
Consequently, an application of Corollary 5.1 in \cite{chernozhukov2014gaussian} yields that
\begin{eqnarray}\label{eq:vc-bound-1}
    \mathbb{E}\sup_{\pi\in\Pi} |\widehat{p}_{\textup{DR}}(\pi)-\mathbb{E}[ \widehat{p}_{\textup{DR}}(\pi)]| &=& O\left(\frac{1}{\sqrt{n}}\sqrt{\frac{v}{\epsilon^2}\log^{-1}\epsilon^2} +\frac{v}{n\varepsilon}\log^{-1}\epsilon^2\right)\nonumber\\
    &=& O\left(\frac{1}{\epsilon}\sqrt{\frac{v\log^{-1}\epsilon}{n}} +\frac{v\log^{-1}\epsilon}{n\epsilon}\right).
\end{eqnarray}
Combining equations \eqref{eq:regret-decomp}, \eqref{eq:sup-decomp} and \eqref{eq:vc-bound-1}, we obtain for any $\pi\in\Pi$ that
\begin{eqnarray}
    p^*(\pi^*) - p^*(\widehat{\pi}) = O\left(\beta\log^{-1}\epsilon +  \frac{1}{\epsilon}\sqrt{\frac{v\log^{-1}\epsilon}{n}} +\frac{v\log^{-1}\epsilon}{n\epsilon} + \frac{1}{\epsilon}\Vert \widehat{g} - g^*\Vert \cdot \Vert \frac{\widehat{\pi}_{\textup{ref}}}{\pi_{\textup{ref}}}-1 \Vert\right).\nonumber
\end{eqnarray}
This completes the proof of Theorem \ref{thm:totalpreference}.

\subsection{Proof of Corollary \ref{coro:doublerobustpo}}
The proof of Corollary \ref{coro:doublerobustpo} follows directly from the assertion of Theorem \ref{thm:totalpreference}. 
Before proving Theorem~\ref{thm:suboptimal}, we discuss the tightness of the suboptimality upper bounds derived therein for PPO, DPO, and DRPO.  For each algorithm, its gap contains three components:
\begin{itemize}[leftmargin=*]
    \item A bias term induced by KL-regularization, which is proportional to $\beta$.
    \item A statistical complexity term of the form $\sqrt{v/n}$, which depends on the sample size $n$ and the complexity measure $v$ of the policy class.
    \item A reward/reference policy estimation error term.
\end{itemize}
Since the first term can be made arbitrarily small by choosing a sufficiently small $\beta$, we discuss the tightness of the second and the third terms below. For the second term, our upper bounds for PPO and DPO match the lower bounds developed in \cite{nika2024reward}, indicating their tightness, Finally, our theoretical investigation reveals that under certain settings -- e.g., when the reward is linear and the prompt distribution is multivariate Gaussian -- the suboptimality gap of PPO depends linearly on the estimation error of the regression coefficient, which itself is proportional to the reward model estimation error. Meanwhile, for DPO, when there is a constant gap between the specified and oracle reference policy, the algorithm suffers from a constant suboptimality gap that will not converge to zero. This demonstrates the tightness of the third term.

\subsection{Proof of Theorem \ref{thm:suboptimal}}\label{sec:proof of corollary 7}
\textbf{Suboptimality gap for DRPO:} If the BT assumption holds, we have $g^*(y_1,y_2,x) = \sigma(r^*(y_1,x) - r^*(y_2,x))$ where $\sigma(x) = 1/(1+e^{-x})$ is the sigmoid function. 
Since the sigmoid function is monotonically increasing, under the realizability assumption,  $\pi^*$ which maximizes $J(\pi)$ also maximizes $p^*(\pi)$. This follows from the classical results on the maximum rank correlation estimator that has been widely studied in the econometrics literature \citep[see e.g.,][]{han1987non,sherman1993limiting}. 
Therefore,
\begin{eqnarray}
    p^*(\pi^*) - p^*(\widehat{\pi}) &=& \mathbb{E}_{y^*\sim\pi^*, \widetilde{y}\sim \widehat{\pi}, y\sim\pi_{\textup{ref}}}\left\{g^*(y^*,y,x) - g^*(\widetilde{y}, y,x)\right\} \nonumber\\
    &=&\mathbb{E}_{y^*\sim\pi^*, \widetilde{y}\sim \widehat{\pi}, y\sim\pi_{\textup{ref}}}\left\{ \sigma'(\xi)\left[(r^*(y^*,x) - r^*(y,x)) -(r^*(\widetilde{y},x) - r^*(y,x))\right]  \right\}^2\nonumber\\
    &=& \mathbb{E}_{y^*\sim\pi^*, \widetilde{y}\sim \widehat{\pi}}\left\{ \sigma'(\xi)(r^*(y^*,x) - r^*(\widetilde{y},x))\right\}\nonumber\\
    &\geq& C_0( J(\pi^*) - J(\widehat{\pi}))\nonumber,
\end{eqnarray}
where $C_0$ is some positive constant and $\xi$ is some real number between  $r^*(y^*,x) - r^*(y,x)$ and $r^*(\widetilde{y},x) - r^*(y,x)$. Here, the second equality follows from mean value theorem. The last equality follows from the identity that $\sigma'(x) = \sigma(x)(1-\sigma(x))$, which is bounded away from zeroo under Assumption \ref{assump:Boundedness} that the reward is bounded by some constant. Thus, we obtain $J(\pi^*) - J(\widehat{\pi}) = O(\textup{Gap}(\widehat{\pi}))$ and the suboptimality gap for DRPO follows directly from the assertion in Theorem \ref{thm:totalpreference}.

\textbf{Suboptimality gap for PPO-based algorithm:}
We begin with some notations. For a given estimated reward $\widehat{r}$, define
\begin{itemize}
    \item $l(\pi) = \mathbb{E} [\mathbb{E}_{y\sim\pi} \widehat{r}(y,X)] - \beta \textup{KL}(\pi\Vert\pi_{\textup{ref}})$,
    \item $l_n(\pi) = \mathbb{E}_n \mathbb{E}_{y\sim\pi} \widehat{r}(y,X) - \beta \textup{KL}(\pi\Vert\pi_{\textup{ref}})$,
    \item $\widetilde{\pi} = \arg\max_{\pi\in\Pi} l(\pi)$,
    \item $\widehat{\pi} = \arg\max_{\pi\in\Pi} l_n(\pi)$.
\end{itemize}
Using the fact that $l(\widetilde{\pi})\geq l(\pi^*)$ and $l_n(\widehat{\pi})\geq l_n(\widetilde{\pi})$, we obtain the following upper bound:
\begin{eqnarray}\label{eq:ppo-regret-decomp}
    J(\pi^*) - J(\widehat{\pi}) 
    &\leq& \mathbb{E}\left\{[J(\pi^*) - l(\pi^*)] + [l(\widetilde{\pi}) - l_n(\widetilde{\pi})] +[l_n(\widehat{\pi}) -l(\widehat{\pi})] + [l(\widehat{\pi}) - J(\widehat{\pi})]\right\}\nonumber\\
    &\leq&  \mathbb{E}\left\{[J(\pi^*) - l(\pi^*)] \right\}+  \mathbb{E}\left\{[l(\widehat{\pi}) - J(\widehat{\pi})]\right\} + 2\mathbb{E}\sup_{\pi\in\Pi}\left\{|l(\pi) - l_n(\pi)|\right\}.
\end{eqnarray}
For the first term, we have
\begin{eqnarray}\label{eq:ppo-12}
     \mathbb{E}\left\{|J(\pi^*) - l(\pi^*)| \right\} &=& \mathbb{E}_{y\sim\pi^*}|\widehat{r}(y,X) - r^*(y,X)| + \beta \textup{KL}(\pi^*\Vert \pi_{\textup{ref}})\nonumber\\
     & = & \mathbb{E}_{y\sim\pi_{\textup{ref}}}\left[\frac{\pi^*(y|X)}{\pi_{\text{ref}}(y|X)}|\widehat{r}(y,X) - r^*(y,X)|\right] + O(\beta\log^{-1}\epsilon)\nonumber\\
     & = & O\left(\frac{1}{\sqrt{\epsilon}}\Vert \widehat{r} - r^* \Vert\right) + O(\beta\log^{-1}\epsilon),
\end{eqnarray}
where the last equation follows from Cauchy-Schwarz inequality. 

Using a similar argument, we obtain that $\mathbb{E}\left\{|l(\widehat{\pi}) - J(\widehat{\pi})|\right\} = O\left(\frac{1}{\sqrt{\epsilon}}\Vert \widehat{r} - r^* \Vert + \beta\log^{-1}\epsilon \right)$. 

Finally, under assumption \ref{assump:Boundedness}, the function class $\mathcal{F} = \left\{ \sum_y \widehat{r}(y,X)\pi(y|X)\big|\pi\in \Pi \right\}$ is bounded by a constant. Using similar arguments to the proof of Theorem 5, we can employ Corollary 5.1 in \cite{chernozhukov2014gaussian} to show that
\begin{equation}\label{eq:ppo-3}
    \mathbb{E}\sup_{\pi\in\Pi}\left\{|l(\pi) - l_n(\pi)|\right\} = O\left( \frac{v}{n}+ \sqrt{\frac{v}{n}} \right) +O(\beta\log^{-1}\epsilon).
\end{equation}
Combining equations \eqref{eq:ppo-regret-decomp}, \eqref{eq:ppo-12} and \eqref{eq:ppo-3}, we obtain that
\[
J(\pi^*)- J(\widehat{\pi}) = O\left(\beta\log^{-1}\epsilon+\frac{v}{n}+ \sqrt{\frac{v}{n}}+\frac{1}{\sqrt{\epsilon}}\Vert \widehat{r} - r^* \Vert\right).
\]

\textbf{Suboptimality gap for DPO-based algorithm:}
We need some additional technical conditions to prove the suboptimality gap for DPO-based algorithms. Recall that when BT-model holds, there exists a one-on-one correspondence between the policy and reward model \citep{rafailov2023direct}. We further assume
\begin{assumption}[Realizability]\label{assump:realizability}
    The oracle reward $r^*$ lies in the bounded reward function class $\mathcal{R}=\{\beta \log(\pi(y|x)/\pi_{\textrm{ref}}(y|x))+\beta Z(x) : \pi\in \Pi\}$ induced by the policy class $\Pi$.
\end{assumption}
\begin{assumption}[Coverage]\label{assump:stronger-coverage}
    Both $\pi_{\textup{ref}}$ and $\widehat{\pi}_{\textup{ref}}$ are lower bounded by some constant $\epsilon >0$. 
\end{assumption}
\begin{assumption}[Suboptimality gap for oracle reward]\label{assump:suboptimality-gap}
    Let $y_x^* = \arg\max_y r^*(y|x)$ and $\bar{y}_x = \arg\max_{y\neq y^*}r^*(y|x)$. There exists a positive constant $\bar{c}$ such that for any $x$,
    \[
    r^*(y_x^*, x) - r^*(\bar{y}_x,x) \geq \bar{c}.
    \]
\end{assumption}

Notice that both the realizability and the coverage in Assumptions \ref{assump:realizability} and \ref{assump:stronger-coverage} differ from those in the main text. Specifically, Assumption \ref{assump:realizability} imposes the realizability assumption on the oracle reward rather than the optimal policy whereas Assumption \ref{assump:stronger-coverage} is stronger than that in the main text by requiring the denominators of the IS ratios to be strictly positive. 

We also redefine the norm $\Vert\widehat\pi_{\textup{ref}} / \pi_{\textup{ref}} -1\Vert$ as
\begin{eqnarray*}
    \mathbb{E} \Big[\max\Big(\frac{\widehat{\pi}_{\textrm{ref}}(Y^{(1)}|X)}{\pi_{\textrm{ref}}(Y^{(1)}|X)}, \frac{\pi_{\textrm{ref}}(Y^{(1)}|X)}{\widehat{\pi}_{\textrm{ref}}(Y^{(1)}|X)}\Big)-1\Big]^2.
\end{eqnarray*}
Notice that this norm is no smaller than the one used in the proposed algorithm. 

We next introduce some notations. For a given estimated reference policy $\widehat{\pi}_{\textup{ref}}$, any policy $\pi$ induce a reward function 
\begin{equation}\label{someeqn}
    r^{\pi}(y,x) = \beta\log\left( \frac{\pi(y|x)}{\widehat{\pi}_{\textup{ref}}(y|x)}\right) + \beta Z(x)
\end{equation}
Let $l(\pi)$ be the log-likelihood function induced by reward $r^\pi$ and $l^*(\pi)$ be its variant with $\widehat{\pi}_{\textrm{ref}}$ in the denominator of \eqref{someeqn} replaced by the ground truth $\pi_{\textrm{ref}}$. Denote $\widetilde{\pi} = \arg\max_\pi \mathbb{E}_n l(\pi)$ and $\widehat{\pi} = \arg\max_\pi \mathbb{E}l(\pi)$. It follows that
\begin{eqnarray}\label{eq:eq1}
    &&\mathbb{E}_n l(\widetilde{\pi}) - \mathbb{E}_n l(\widehat{\pi}) - \mathbb{E}l(\widetilde{\pi}) +\mathbb{E}l(\widehat{\pi}) \nonumber\\
    &\leq& \mathbb{E}l(\widehat{\pi}) - \mathbb{E}l(\widetilde{\pi})\nonumber\\
    &\leq& \mathbb{E}l(\widehat{\pi}) - \mathbb{E}l^*(\widetilde{\pi}) + \mathbb{E}l^*(\widetilde{\pi}) - \mathbb{E}l(\widetilde{\pi})\nonumber\\
    &\leq& -C_1\mathbb{E}\left\Vert  \widehat{r}(y_1,x) - \widehat{r}(y_2,x) - r^*(y_1,x) + r^*(y_2,x)\right\Vert_2^2+\beta^2 C_2 \mathbb{E}\left(\log\frac{\widehat{\pi}_{\textup{ref}}(Y^{(1)}|X)}{\pi_\textup{ref}(Y^{(1)}|X)}\right)^2\nonumber\\
    &\leq& -C_1\sigma^2+\beta^2 C_2\Vert\widehat\pi_{\textup{ref}} / \pi_{\textup{ref}} -1\Vert^2, 
\end{eqnarray} 
where $\sigma^2 = \mathbb{E}\left\Vert  \widehat{r}(y_1,x) - \widehat{r}(y_2,x) - r^*(y_1,x) + r^*(y_2,x)\right\Vert_2^2$, and both $C_1$ and $C_2$ are positive constants because the Hessian matrix is bounded away from zero and infinity, which follows from the boundedness assumption on the reward. Additionally, the last inequality is due to that $x\le \exp(x)-1$ for any $x\ge 0$, which entails
\begin{eqnarray*}
    \mathbb{E}\left(\log\frac{\widehat{\pi}_{\textup{ref}}(Y^{(1)}|X)}{\pi_\textup{ref}(Y^{(1)}|X)}\right)^2\le \mathbb{E}\left[\log\max\Big(\frac{\widehat{\pi}_{\textup{ref}}(Y^{(1)}|X)}{\pi_\textup{ref}(Y^{(1)}|X)},\frac{\pi_\textup{ref}(Y^{(1)}|X)}{\widehat{\pi}_{\textup{ref}}(Y^{(1)}|X)}\Big)\right]^2\\
    \le \mathbb{E} \Big[\max\Big(\frac{\widehat{\pi}_{\textrm{ref}}(Y^{(1)}|X)}{\pi_{\textrm{ref}}(Y^{(1)}|X)}, \frac{\pi_{\textrm{ref}}(Y^{(1)}|X)}{\widehat{\pi}_{\textrm{ref}}(Y^{(1)}|X)}\big)-1\Big]^2
\end{eqnarray*}

Moreover, according to Corollary 5.1 in \cite{chernozhukov2014gaussian}, using similar arguments to the proof of Theorem 5 and PPO-based algorithms, we have
\begin{eqnarray}
    \mathbb{E}_n l(\widetilde{\pi}) - \mathbb{E}_n l(\widehat{\pi}) - \mathbb{E}l(\widetilde{\pi}) +\mathbb{E}l(\widehat{\pi}) &\leq& 2  \mathbb{E} \sup_{\pi \in \Pi} |l(\pi) - \mathbb{E}l_n(\pi)| \nonumber\\
    & \leq & O(\sigma \sqrt{\frac{v}{n}}+ \frac{v}{n}).
\end{eqnarray}
This together with equation \eqref{eq:eq1} yields that $C_1(\sigma-\bar{c} \sqrt{v/n} )^2\le \bar{c} v/n+ \beta^2 C_2\Vert\widehat\pi_{\textup{ref}} / \pi_{\textup{ref}} -1\Vert^2$ for some constant $\bar{c}>0$, and hence
\begin{eqnarray}\label{eq:sigma-bound}
    \sigma =  O\left(\sqrt{\frac{v}{n}} + \beta \Vert\widehat\pi_{\textup{ref}}/\pi_{\textup{ref}}-1 \Vert_2\right).
\end{eqnarray}

Recall that $\pi^*$ is the true optimal policy, and $\widehat{\pi}$ in this part of the proof denotes DPO's estimated optimal policy. We further define $\widehat{\pi}^*$ as a softmax optimal policy based on the oracle reward function $r^*$
\begin{equation*}
    \widehat{\pi}^*(y|x)=\frac{\widehat{\pi}^*(y|x)\exp(\frac{1}{\beta} r^*(y,x))}{\sum_{y'}\widehat{\pi}^*(y'|x) \exp(\frac{1}{\beta} r^*(y',x))}.
\end{equation*}
With some calculations, it follows that
\begin{eqnarray}\label{eqn:dpo-regret-decomp}
\begin{split}
    &J(\pi^*) - J(\widehat{\pi}) \\
    =& \mathbb{E} [\mathbb{E}_{y\sim\pi^*}r^*(y,X) - \mathbb{E}_{y\sim\widehat{\pi}}r^*(y,X)]\\
    =&\mathbb{E} (\mathbb{E}_{y\sim\pi^*}r^*(y,X) - \mathbb{E}_{y\sim\widehat{\pi}^*}r^*(y,X)) + \mathbb{E}(\mathbb{E}_{y\sim\widehat{\pi}^*}r^*(y,X) - \mathbb{E}_{y\sim\widehat{\pi}}r^*(y,X)),
\end{split}
\end{eqnarray}
where the outer expectations are taken with respect to the prompt distribution. 

Recall that $y_x^*$ denotes the optimal response to the prompt $x$. The first term $\mathbb{E}[\mathbb{E}_{y\sim\pi^*}r^*(y,X) - \mathbb{E}_{y\sim\widehat{\pi}^*}r^*(y,X)]$ can be upper bounded by
\begin{eqnarray}\label{eq:eq2}
    \mathbb{E} r^*(y_X^*,X) - \mathbb{E} [\mathbb{E}_{y\sim\widehat{\pi}^*} r^*(y,X)] &=& \mathbb{E} r^*(y_X^*,X) - \mathbb{E} \left\{  \frac{ \sum_y r^*(y,X)\widehat{\pi}_{\textup{ref}}(y|X)\exp\left(\frac{1}{\beta}r^*(y,X)\right)}{\sum_y \widehat{\pi}_{\textup{ref}}(y|X)\exp\left(\frac{1}{\beta}r^*(y,X)\right)} \right\} \nonumber\\
    &\le & \mathbb{E} r^*(y_X^*,X) - \mathbb{E} \left\{  \frac{ r^*(y_X^*,X)\widehat{\pi}_{\textup{ref}}(y_X^*|X)\exp\left(\frac{1}{\beta}r^*(y_X^*,X)\right)}{\sum_y \widehat{\pi}_{\textup{ref}}(y|X)\exp\left(\frac{1}{\beta}r^*(y,X)\right)} \right\} \nonumber\\
    &=& O\left(\frac{1}{\epsilon}\exp\left(-\frac{\bar{c}}{\beta}\right)\right), \nonumber
\end{eqnarray}
where the last equality is due to that under Assumptions \ref{assump:stronger-coverage} and \ref{assump:suboptimality-gap}, the difference between $1$ and the ratio $\frac{\widehat{\pi}_{\textup{ref}}(y_X^*|X)\exp\left(\frac{1}{\beta}r^*(y_X^*,X)\right)}{\sum_y \widehat{\pi}_{\textup{ref}}(y|X)\exp\left(\frac{1}{\beta}r^*(y,X)\right)}$ is of the order $O\left(\frac{1}{\epsilon}\exp\left(-\frac{\bar{c}}{\beta}\right)\right)$, almost surely.

Using mean value theorem, the second term can be bounded by 
\begin{equation}\label{eqn:dpo-reg-eq2}
\mathbb{E}\sum_y|\widehat{\pi}(y|X) -\widehat{\pi}^*(y|X)|\leq \frac{1}{\beta}\mathbb{E}\max_y |\widehat{r}(y,X) -r^*(y,X)|\leq \frac{1}{\beta\sqrt{\epsilon}}\Vert \widehat{r} -r^*\Vert_2,
\end{equation}
where the last inequality follows from the fact that
\begin{eqnarray}
    \| \widehat{r} - r^*\|_2 &=& \mathbb{E}\{ (\widehat{r} - r^*)^2\}^{1/2} \nonumber\\
    & = & \mathbb{E}\left\{\sum_y \pi_{\textup{ref}}(y|X)(\widehat{r}(y|X) - r^*(y|X))^2\right\}^{1/2}\nonumber\\
    &\geq& \sqrt{\epsilon}\mathbb{E}\left\{\sum_y (\widehat{r}(y|X) - r^*(y|X))^2\right\}^{1/2}\nonumber\\
    &\geq&\sqrt{\epsilon}\max_y |\widehat{r}(y,X)-r^*(y|X)|.
\end{eqnarray}
To complete the proof, it remains to upper bound $\| \widehat{r} - r^*\|_2$ using $\sigma^2$. Recall that $\sigma^2 = \mathbb{E}\left\Vert  \widehat{r}(Y^{(1)},X) - \widehat{r}(Y^{(2)},X) - r^*(Y^{(1)},X) + r^*(Y^{(2)},X)\right\Vert_2^2$. Since $Y^{(2)}$ is independent of $Y^{(1)}$ given $X$ and that $\pi_{\textrm{ref}}$ is lower bounded by $\epsilon>0$, it follows that
\begin{equation*}
    \sigma^2\ge \epsilon \mathbb{E}\left\Vert  \widehat{r}(Y^{(1)},X) - \widehat{r}(y_0,X) - r^*(Y^{(1)},X) + r^*(y_0,X)\right\Vert_2^2,
\end{equation*}
for a fixed $y_0$. Notice that the RHS corresponds to the mean squared error between $\widehat{r}$ and $r^*$, up to a baseline term that is independent of $Y^{(1)}$. Without loss of generality, we can assume this baseline term $r^*(y_0,X)-\widehat{r}(y_0,X)$ is equal to zero without affecting the validity of the proof. This is because the true reward can be redefined as $r^*(\bullet,X)-r^*(y_0,X)$, since it is equivalent up to a function independent of the response. Similarly, the estimated optimal policy $\widehat{\pi}(\bullet|x)$ computed by DPO can be represented using the difference $\widehat{r}(\bullet,x)-\widehat{r}(y_0,x)$, and we can replace $\widehat{r}$ in \eqref{eqn:dpo-reg-eq2} using this difference. Consequently, we obtain that $\sigma^2 \geq \epsilon \Vert \widehat{r}-r^*\Vert^2$ and hence
\begin{equation*}
    \Vert \widehat{r}-r^*\Vert=O\left(\epsilon^{-1/2}\sqrt{\frac{v}{n}} + \beta \epsilon^{-1/2}\Vert \widehat\pi_{\textup{ref}}/\pi_{\textup{ref}}-1 \Vert_2\right).
\end{equation*}
Combining this together with equations \eqref{eq:sigma-bound} and \eqref{eqn:dpo-regret-decomp}, we obtain that the regret is upper bounded by
\begin{equation*}
    O\left(\frac{\exp(-\bar{c}\beta^{-1})}{\epsilon} + \frac{1}{\beta\epsilon}\sqrt{\frac{v}{n}} + \frac{1}{\epsilon}\Vert \widehat{\pi}_{\text{ref}} / \pi_\textup{ref} -1\Vert\right).
\end{equation*}
The proof is hence completed.

\section{DRPO Algorithm Details and Practical Implementation} \label{sec: Algorithm} 
This section details our proposed algorithm. Notably, the reference model $\widehat{\pi}_\mathrm{ref}$ and the preference model $\widehat{g}$ are pre-trained independently prior to policy optimization. The proposed objective function is defined as
\begin{equation}\label{eq:objective-for-one-data}
    \mathcal{J}(\pi_\theta; \widehat{\pi}_\mathrm{ref}, \widehat{g}_\eta, \mathcal{D}) =  \widehat{p}_{\textrm{DR}}(\pi)-\beta \mathbb{E}_{X\sim \mathcal{D}} {D}_{\mathrm{KL}}[ \pi(\bullet \mid X) \,\|\, \widehat{\pi}_{\mathrm{ref}}(\bullet \mid X) ].
\end{equation}

The gradient of $\mathcal{J}(\pi_\theta)$ is given by:
\begin{eqnarray}
\label{eqn: grad-object}
\nabla_\theta \mathcal{J}(\pi_\theta) &=& \frac{1}{2} \mathbb{E}_{X, Y^{(1)}, Y^{(2)} \sim \mathcal{D}} \left\{  \sum_{a=1}^2\mathbb{E}_{y \sim \pi_\theta(\bullet\mid X)} \left[\widehat{g}(X, y, Y^{(a)})  \nabla_\theta \log \pi_\theta(y|X)\right] \right. \nonumber \\
&& \left. + \sum_{a=1}^2  (-1)^{a-1}
\frac{\nabla_\theta \pi_\theta(Y^{(a)}|X)}{\widehat{\pi}_{\mathrm{ref}}(Y^{(a)}|X)}\big(Z - \widehat{g}(X,Y^{(1)}, Y^{(2)})\big)\right\}\nonumber \\
&& - \beta\nabla_{\theta}{D}_{\mathrm{KL}}[ \pi_\theta(\bullet \mid X) \,\|\, \widehat{\pi}_{\mathrm{ref}}(\bullet \mid X) ]
\end{eqnarray}

Intuitively, the gradient operates as follows: The first term guides the policy to favor responses preferred by the preference model $\widehat{g}$. When
$Y^{(1)} \succ Y^{(2)},$ which means $Z = 1$, the second term enhances the likelihood of $Y^{(1)}$ while diminishing the likelihood of $Y^{(2)}$, and vice versa. 

The empirical loss function is constructed such that its negative gradient corresponds to  $\nabla_\theta\mathcal{J}(\pi_\theta)$ in Equation \ref{eqn: grad-object}. The direct-method term is approximated using Monte Carlo sampling by drawing several new responses $\mathcal{D}_X^* := \{Y^* \mid Y^* \sim \pi_\theta(\bullet\mid X)\}$ from the current policy $\pi_\theta$ for a given prompt X at each policy update. A k3-type empirical KL divergence is utilized, following \cite{shao2024deepseekmath}.
\begin{eqnarray}
\label{eq: loss-fn}
\mathcal{L}_{\textup{DRPO}} &=& - \frac{1}2{}\mathbb{E}_{X,Y^{(1)}, Y^{(2)} \sim \mathcal{D}} \Bigg\{
\mathbb{E}_{Y^* \sim \mathcal{D}_X^*} \left[\sum_{a=1}^2\widehat{g}(Y^*, Y^{(a)}, X)  \log \pi_\theta(Y^*|X) \right]
 \nonumber \\ 
&& +  \sum_{a=1}^2 (-1)^{a-1} \frac{\pi_\theta(Y^{(a)}|X)}{\pi_{\mathrm{ref}}(Y^{(a)}|X)} \big(Z - \widehat{g}(X,Y^{(1)}, Y^{(2)})\big)\Bigg\} \nonumber \\
&& + \beta\mathbb{E}_{Y^* \sim \mathcal{D}_X^*, X\sim\mathcal{D}}\left[\frac{\widehat{\pi}_\mathrm{ref}(Y^*\mid X)}{\pi_\theta(Y^*\mid X) } - 1 - \log\frac{\widehat{\pi}_\mathrm{ref}(Y^*\mid X)}{\pi_\theta(Y^*\mid X)}\right]
\end{eqnarray}
Maximization of $\mathcal{J}(\pi_\theta)$ is achieved by minimizing the loss function. In practice, the original offline dataset is augmented to $\mathcal{D}$ to $\widetilde{\mathcal{D}}$ by including swapped pairs (i.e. for $(X, Y^{(1)}, Y^{(2)}, Z)$, we add $(X, Y^{(2)}, Y^{(1)}, 1 -Z)$ to $\widetilde{\mathcal{D}}$, simplifying the empirical loss function (\ref{eq: loss-fn}). Furthermore, the importance sampling ratio is clipped, and its calculation is decoupled from the gradient computation. This is achieved by stopping auto-differentiation for the ratio and multiplying the importance sampling term by $\log \pi_\theta$, which shrinks (rather than eliminates) gradients in small $\widehat{\pi}_\mathrm{ref}$ regions while maintaining approximate arithmetic equivalence. Consequently, the loss function is reformulated as:

\begin{eqnarray}
\label{eq: sg-clip}
\mathcal{L}_{\textup{DRPO}} &=& - \frac{1}{2}\mathbb{E}_{X,Y^{(1)}, Y^{(2)} \sim \widetilde{\mathcal{D}}} \Bigg\{\underbrace{\mathbb{E}_{Y^* \sim \mathcal{D}_X^*} \left[ \widehat{g}(Y^*, Y^{(2)}, X) \log \pi_\theta(Y^*|X) \right]}_{\mathrm{term\ I}}  \nonumber \\
&& + \mathrm{sg}\bigg(\underbrace{\mathrm{clip}\Big( 
\frac{\pi_\theta(Y^{(1)}|X)}{\pi_{\mathrm{ref}}(Y^{(1)}|X)}, 1 - \epsilon_1, 1+\epsilon_2\Big) \big(Z - \widehat{g}(Y^{(1)}, Y^{(2)}, X)\big)}_{\mathrm{term\ II}}\bigg)\log \pi_\theta(Y^{(1)}\mid X)\Bigg\} \nonumber \\
&& + \beta\mathbb{E}_{Y^* \sim \mathcal{D}_X^*, X\sim\mathcal{\widetilde{D}}}\left[\frac{\widehat{\pi}_\mathrm{ref}(Y^*\mid X)}{\pi_\theta(Y^*\mid X) } - 1 - \log\frac{\widehat{\pi}_\mathrm{ref}(Y^*\mid X)}{\pi_\theta(Y^*\mid X)}\right]
\end{eqnarray}

where $\mathrm{sg}(\bullet)$ denotes stop-gradient operation, $\mathrm{clip}(\bullet, a, b)$ signifies clipping values to the interval $[a,b]$, and $\epsilon_1, \epsilon_2$ are hyperparameters defining the clipping range. See full details in Algorithm \ref{algo:drpo}

\begin{algorithm}[htbp]
    \caption{Double Robust Preference Optimization}\label{algo:drpo}
    \begin{algorithmic}[1]
        \REQUIRE reference policy $\widehat{\pi}_\mathrm{ref}$, preference model $\widehat{g}$, offline dataset $\widetilde{\mathcal{D}} = \{X_i, Y^{(1)}_i, Y^{(2)}_i, Z_i\}$, clipping range $[\epsilon_1, \epsilon_2]$, regularization parameter $\beta$, and other hyperparameters, effective batch size $|\mathcal{B}|$, learning rate $\alpha$ and the optimizer,
        number of Monte Carlo samples $|\mathcal{D}^*|$.
        \ENSURE trained policy $\pi_\theta$
        \STATE \textbf{Initialize} policy $\pi_\theta^{(0)}$, total train steps $T = \frac{|\widetilde{\mathcal{D}}|}{|\mathcal{B}|}$. For brevity let the number of training epochs $N=1$.
        \FOR{$t = 1, \ldots, T$}
        \FOR{$i$ in $\mathcal{B}_t := \{(t-1)|\mathcal{B}|, \ldots, t|\mathcal{B}|\}$}
        \STATE Sample $\mathcal{D}^*_{X_i}=\{Y^*_j\mid Y^*_j \sim \pi^{(t-1)}_\theta(\bullet\mid X_i)\}_{j\in[|\mathcal{D}^*|]}$.

        \STATE Esitmate term I:
        \[\widehat{\mathrm{I}}_i = \frac{1}{|\mathcal{D}^*_{X_i}|}\sum_{Y^* \in \mathcal{D}^*_{X_i}}\widehat{g}(Y^*, Y_i^{(2)}, X_i)\log\pi^{(t-1)}_\theta(Y^*|X_i)\]
        
        \STATE Estimate term II: 
        \[\widehat{\mathrm{II}}_i = \mathrm{clip}\Bigg( 
        \frac{\pi^{(t-1)}_\theta(Y^{(1)}_i|X_i)}{\pi_{\mathrm{ref}}(Y^{(1)}_i|X_i)}, 1 - \epsilon_1, 1+\epsilon_2\Bigg) \big(Z - \widehat{g}(Y^{(1)}_i, Y^{(2)}_i, X_i)\big)\]
        
        \STATE Estimate KL divergence:
        \[\widehat{D}_{\mathrm{KL}_i} = \frac{1}{|\mathcal{D}^*_{X_i}|}\sum_{Y^* \in \mathcal{D}^*_{X_i}}
        \left(\frac{\widehat{\pi}_\mathrm{ref}(Y^*\mid X)}{\pi_\theta(Y^*\mid X) } - 1 - \log\frac{\widehat{\pi}_\mathrm{ref}(Y^*\mid X)}{\pi_\theta(Y^*\mid X)}\right)\]
        
        \STATE Compute the empirical loss function on the batch:
        \[\begin{aligned} \mathcal{L} =
        &\frac{1}{|\mathcal{B}_t|}\sum_{i \in \mathcal{B}_t}\bigg\{
        -\frac{1}{2}\bigg[\, \widehat{\mathrm{I}}_i + \mathrm{sg}\big(\,\widehat{\mathrm{II}}_i\,\big)\log \pi_\theta^{(t-1)}(Y^{(1)}_i\mid X_i) \bigg]
        +  \beta\widehat{D}_{\mathrm{KL}_i}\Bigg\}\\
        \end{aligned}\] 
        
        \ENDFOR
        \STATE update $\theta^{(t)}$ with gradient descent and get $\pi_\theta^{(t)}$:
        \[\theta^{(t)} = \theta^{(t-1)} - \alpha \nabla_\theta\mathcal{L}\]
        \ENDFOR
    \end{algorithmic}
\end{algorithm}

\section{Experiments Implementation details} \label{APP：Experiment}
For the baseline models training, we follow the framework of \texttt{TRL}: \textit{Transformer Reinforcement Learning} \cite{vonwerra2022trl} and \texttt{Transformers}: \textit{State-of-the-Art Natural Language Processing} \cite{wolf-etal-2020-transformers}. For the general preference model, we follow the framework of \texttt{general-preference/general-preference-model} proposed by \citet{zhang2024general}. All models were trained with default hyperparameter configurations unless otherwise specified. 

The Preference Evaluation experiments are conducted on a machine equipped with an NVIDIA RTX 6000 Ada GPU and an AMD Ryzen Threadripper PRO 7945WX 12-core CPU. The Preference Optimization experiments are performed on a system with an H20 NVLink GPU and a 20 vCPU Intel(R) Xeon(R) Platinum 8457C processor. AdamW \cite{loshchilov2019decoupledweightdecayregularization} are used as default optimizer.

\subsection{Preference Evaluation Experiment on IMDb}\label{sec: DRPE}
\textbf{Oracle Preference Model}. Since the IMDb dataset does not contain human preference labels, we adopt the known sentiment classifier \texttt{siebert/sentiment-roberta-large-english} \cite{hartmann2023}, as a ground-truth reward-based labeler. This classifier will give a score $s(X,Y) =p(\text{positive}\ |\ X,Y)$, which we convert into a reward signal using the log-odds transformation: \[r^*(X,Y) = \log\left(\frac{s(X,Y)}{1 - s(X,Y)}\right).\]
Using the Bradley–Terry (BT) model, we then compute the ground-truth preference probability between two completions as: 
 \[\mathbb{P}^*(Y^{(1)}\succ Y^{(2)}|X) = \sigma(r^*(X,Y^{(1)})-r^*(X,Y^{(2)})),\]
where $\sigma(\bullet)$ is the sigmoid function.

\textbf{Data Generation and Policy Training Process.}
We begin by fine-tuning supervised fine-tuning (SFT) models initialized from two base architectures of different scales: the \texttt{EleutherAI/gpt-neo-125m} \citep{gpt-neo} and the \texttt{Qwen/Qwen2.5-7B} \citep{qwen2.5}. Both models are trained for three epochs on 25,000 samples from the IMDb training set. Prompts are constructed by extracting 5-word prefixes from movie reviews. Using the fine-tuned SFT model as the reference policy, we generate pairs of completions for each prompt. Next, we use the oracle preference model to estimate the preference probabilities between each pair of completions. Based on these probabilities, we sample binary preference labels indicating which response is preferred. This synthetic preference dataset is then used to train a target policy using the Direct Preference Optimization (DPO) algorithm over an additional 3 epochs. To quantify the relative preference for the target policy over the reference policy, we adopt a Monte Carlo estimation approach. Specifically, for each of the 25,000 prefixes in the IMDb test set, both the target and reference policies generate a single completion. The oracle preference model is then used to compute the preference probability between the two completions. Aggregating these results, we estimate the overall probability, which is 0.681, that the target policy’s outputs are preferred over those of the reference policy.

\textbf{Preference Evaluation Process.} We consider two versions of the reference policy estimator $\widehat{\pi}_{\textrm{ref}}$: a correctly specified version, where $\widehat{\pi}_{\textrm{ref}}$ corresponds to the SFT model, and a misspecified version, where $\widehat{\pi}_{\textrm{ref}}$ corresponds to the untrained base model. Similarly, we consider two versions of the preference estimator $\widehat{g}$: a correctly specified version, which uses the oracle preference model, and a misspecified version, where $\widehat{g}$ is drawn uniformly at random from $[0, 1]$. By taking all pairwise combinations of $\widehat{\pi}_{\textrm{ref}}$ and $\widehat{g}$, we construct four distinct variants of the preference evaluation framework. For the Direct Method (DM) estimator in Equation~\ref{eqn:direct}, we apply a Monte Carlo approach by sampling 8 responses from the target policy for each prompt. For the Importance Sampling (IS) estimator in Equation~\ref{eqn:IS}, we use a clipping ratio of 100 when $\widehat{\pi}_{\textrm{ref}}$ is correctly specified and 40 when it is misspecified. In contrast to the clipping ratio used during preference optimization, a larger ratio is adopted here to better demonstrate the double robustness property of our preference evaluation framework. The results based on the \texttt{EleutherAI/gpt-neo-125m} model \citep{gpt-neo} are presented in Figure~\ref{fig: MSE_CI} in Section~\ref{sec:exp}, while those based on the \texttt{Qwen/Qwen2.5-7B} model \citep{qwen2.5} are summarized in Table~\ref{tab:mse_specifications_7b}.
\begin{table}[htbp]
\centering
\caption{MSE of the proposed preference estimator with a 7B base model. The preference model and reference policy can be misspecified or correctly specified.}
\label{tab:mse_specifications_7b}
\begin{tabular}{lrrrr}
\toprule
\textbf{Sample size} & \textbf{500} & \textbf{1000} & \textbf{2000} & \textbf{3000} \\
\midrule
Both correct            & 0.002212 & 0.001160 & 0.000702 & 0.000390 \\
Wrong preference model  & 0.024942 & 0.018757 & 0.016763 & 0.016594 \\
Wrong reference model   & 0.066897 & 0.021389 & 0.013358 & 0.008383 \\
Both wrong              & 0.265155 & 0.069340 & 0.043276 & 0.045954 \\
\bottomrule
\end{tabular}
\end{table}

\subsection{Preference Optimization Experiment on Real Data}\label{sec: DRPO}
\textbf{Baseline models training}. For the \textit{summarization} task, we adopt models from a group of Hugging Face, \texttt{cleanrl}, known for their validated and quality-assured implementations \cite{huang2024n+}. Specifically, we use \texttt{\detokenize{cleanrl/EleutherAI_pythia-1b-deduped__sft__tldr}} as both the reference and initial policy model. This SFT policy is trained via token-level supervised fine-tuning on human-written summaries from a filtered TL;DR Reddit dataset \cite{huang2024n+}. The associated reward model is \texttt{\detokenize{cleanrl/EleutherAI_pythia-1b-deduped__reward__tldr}}. For PPO training, we search the hyperparameter over the KL coefficient $\beta \in \{0.05, 0.1, 0.2\}$ and select $\beta = 0.05$ based on empirical performance. Notably,we observe that PPO training can experience policy collapse under low-precision, as the value function fails to fit accurately; thus, PPO models are trained under full precision (FP32). In contrast, all our models are trained using bfloat16 (BF16) for improved computational efficiency. To ensure a fair comparison, we set the maximum response length to 128 for all models, providing a consistent basis for assessing summarization quality. For DPO and its variants, we use default hyperparameter setting in \texttt{TRL} with BF16 precision. Notably, Dr.DPO had no official \texttt{TRL} implementation, so we adapt the loss function in \texttt{DPOTrainer} with Dr.DPO's reweighting strategy and use the suggested hyperparameters in \cite{wu2024towards}.

For \textit{human dialogue}, the SFT model is trained from the base model \texttt{Qwen/Qwen2.5-1.5B} \cite{qwen2.5} to better align with the Helpfulness and Harmlessness (HH) dataset. Unlike the summarization SFT model, this version leverages both the preferred (chosen) and non-preferred (rejected) responses from the HH preference dataset. It is trained for 3 epochs. We also train three versions of the reward model, all from the same base model (\texttt{Qwen/Qwen2.5-1.5B}) to avoid additional information, corresponding to epochs 1, 2, and 3, as we observe that PPO training in this setting is highly sensitive to the reward model. When the reward model overfits or becomes overly confident, the KL penalty becomes ineffective, and PPO tends to suffer from policy collapse, hacking the reward model by repeating high-reward tokens. To mitigate this issue, we select the reward model from epoch 1, which achieves an evaluation accuracy of 72.1\%. We further conduct a hyperparameter search over KL coefficients $\beta \in \{0.05, 0.1, 0.2\}$ and learning rates in $\{1\text{e-}7, 1\text{e-}6, 3\text{e-}6\}$. We select a KL coefficient of 0.05 combined with a learning rate of $1\text{e-}7$ as it yields the most stable and effective PPO training performance. Similar to those in summarization, DPO and its variants are trained with default setting.


\begin{table}[b]
\caption{Query template for the summarization task.}\label{tab:summarization-gpt}
\begin{tcolorbox}[colback=white, colframe=black, boxrule=0.5pt, fontupper=\ttfamily, sharp corners]
    Which of the following summaries does a better job of summarizing the post?
    Strictly follow two criteria when selecting the best summary:\\
    1. Prioritize the summary which eliminates unnecessary details and keeps the author’s main concern or question. \\
    2. Prioritize the shorter summary as long as it remains clear and preserves the main idea.
    
    \medskip
    Post:
    \textless post\textgreater
    
    \medskip
    Response A:
    \textless response\_a\textgreater
    
    \medskip
    Response B: 
    \textless response\_b\textgreater
    
    \medskip
    FIRST provide a one-sentence comparison of the two summaries, explaining which \\ you prefer and why. SECOND, on a new line, state only "A" or "B" to indicate your \\ choice. Your response should use the format:
    
    \medskip
    Comparison: \textless one-sentence comparison and explanation\textgreater \\
    Preferred: \textless ``A'' or ``B''\textgreater

\medskip
\end{tcolorbox}
\end{table}

\begin{table}[t]
\caption{Query template for the human
dialogue task.}\label{tab:single-turn-gpt}
\begin{tcolorbox}[colback=white, colframe=black, boxrule=0.5pt, fontupper=\ttfamily, fontupper=\ttfamily, sharp corners]
    For the following query to a chatbot, which response is more helpful?
    
    \medskip
    Query:
    \textless user\_query\textgreater
    
    \medskip
    Response A:
    \textless response\_a\textgreater
    
    \medskip
    Response B:
    \textless response\_b\textgreater
    
    \medskip
    FIRST provide a one-sentence comparison of the two responses and \\
    explain which you feel is more helpful. SECOND, on a new line, state only \\
    ``A'' or ``B'' to indicate which response is more helpful. \\
    Your response should use the format:
    
\medskip
    Comparison: \textless one-sentence comparison and explanation\textgreater \\
    More helpful: \textless ``A'' or ``B''\textgreater
\end{tcolorbox}
\end{table}
\textbf{DRPO Implementation}
DRPO implementation inherits \verb|transformers.Trainer| class. For DRPO-BT, we compute the rewards for two candidate responses and output the preference probability under the BT framework as $\widehat{g}$. For DRPO-GPM, we directly compute the preference probability using the corresponding general preference model \cite{zhang2024general}. Although our proposed algorithm allows the use of a more powerful general preference model for estimating $\widehat{g}$, as in 
 \cite{munos2023nash}, we ensure fairness by training all preference models using the same base model and dataset. This avoids introducing any additional information that could bias the comparison. For both tasks, we set the clipping range to $[0.04, 2.5]$, a fairly casual (and wide) specification only to force the IS ratio to not deviate far from 1 and thus not inject too much variance into our estimation. The regularization parameter $\beta$ is set to 0.04, the same as that in the default \verb|trl| implementation for GRPO \cite{shao2024deepseekmath}, which also uses k3-type empirical KL divergence. The number of Monte Carlo samples $|\mathcal{D}^*|$ is set to 3 (TL;DR) or 2 (HH). Although more samples may mitigate bias, the effect of adding samples is marginally decreasing (since the convergence rate is $O((n^*)^{-\frac{1}{2}})$). As such, it is proper to choose a parsimonious volume of samples and thus incurring little extra computational cost compared to PPO. Other not-mentioned hyperparameters are simply set to default values. For further details, please refer to the examples in the codebase.

\textbf{Evaluation} For in-distribution evaluation, we compare DRPO with DPO and PPO using GPT-4o-mini to evaluate the quality of generated response of each task. Specifically,
for the language model fine-tuned by either baseline or our method, we can sample a response at a certain temperature after it receives a prompt. With the responses of two methods (say A and B), we feed them with a query asking GPT to judge which is more aligned with certain demands. The query template used for TL;DR is shown in Table \ref{tab:summarization-gpt}, which tries to avoid GPT's favor of lengthy responses following \cite{ye2025robust}. The query template used for HH is shown in Table \ref{tab:single-turn-gpt}, a standard template that is widely adopted by e.g. \cite{rafailov2023direct, wu2024pairwise, ye2025robust}. It is noteworthy that we randomly shuffle the order of the responses for each query to eliminate the potential bias from the order of the responses.

Here, temperature is the scaler of logits before softmax, which can be used to adjust the output distribution of a certain policy. In general, a temperature less than 1 tends to make kurtosis of the distribution larger (thus more greedy when generating responses), and a temperature larger than 1 generate even more random responses. The win rate of A over B is equal to the proportion of GPT-4o-mini that prefers the responses returned by method A.

For out-of-distribution evaluation in HH dataset, we evaluate our models using the AlpacaEval 2.0 benchmark \citep{dubois2024length}, an LLM-based automatic evaluator designed to assess models' general performance. The prompt set in AlpacaEval 2.0 is derived from AlpacaFarm \citep{dubois2023alpacafarm}, which contains a broad collection of human-written instructions covering a wide range of general-purpose tasks beyond the Helpful–Harmless (HH) domain.
By default, AlpacaEval 2.0 compares each model-generated response against a reference response produced by GPT-4-Turbo, and a GPT-4-Turbo-based annotator determines which of the two is preferred. However, we observed that all fine-tuning algorithms achieved consistently low win rates when evaluated against GPT-4-Turbo references, likely due to the substantial capability gap between GPT-4-Turbo and the fine-tuned models. To ensure a fairer and more interpretable comparison, we therefore replace the reference responses with those generated by the SFT model, allowing AlpacaEval 2.0 to compute the win rate of each fine-tuning algorithm relative to the SFT baseline.

\begin{figure}[t]
    \centering
    \begin{minipage}[b]{0.46\textwidth}
        \centering
        \includegraphics[width=\linewidth]{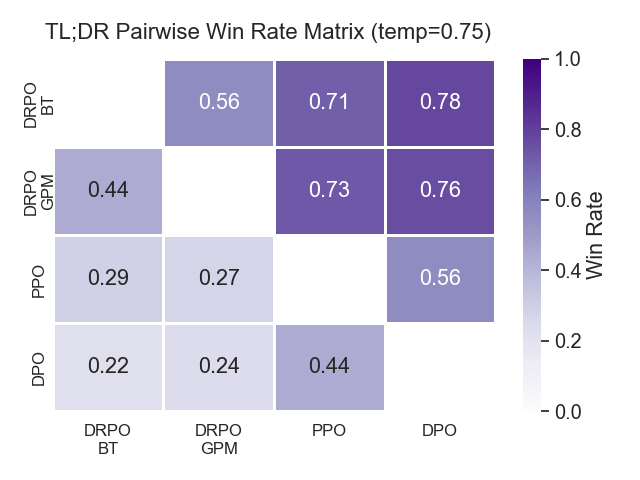}
    \end{minipage}
    \hspace{0.01\textwidth}
        \begin{minipage}[b]{0.46\textwidth}
        \centering
        \includegraphics[width=\linewidth]{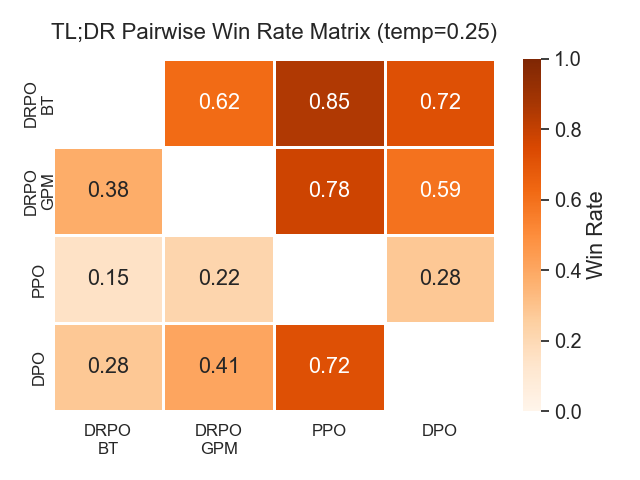}
    \end{minipage}

    



    \caption{Pairwise Win Rates on TL;DR Dataset under different sampling temperatures (left: 0.75; right: 0.25)}
    \label{fig:tldr-other}
\end{figure}

\begin{figure}[htbp]
    \centering
    \begin{minipage}[b]{0.46\textwidth}
        \centering
        \includegraphics[width=\linewidth]{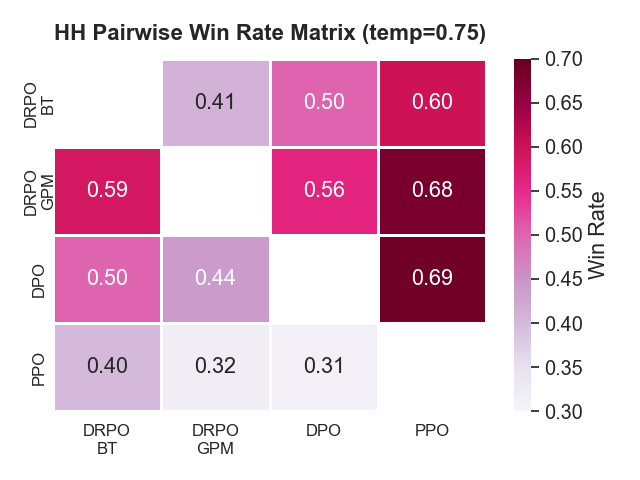}
    \end{minipage}
    \hspace{0.01\textwidth}
        \begin{minipage}[b]{0.46\textwidth}
        \centering
        \includegraphics[width=\linewidth]{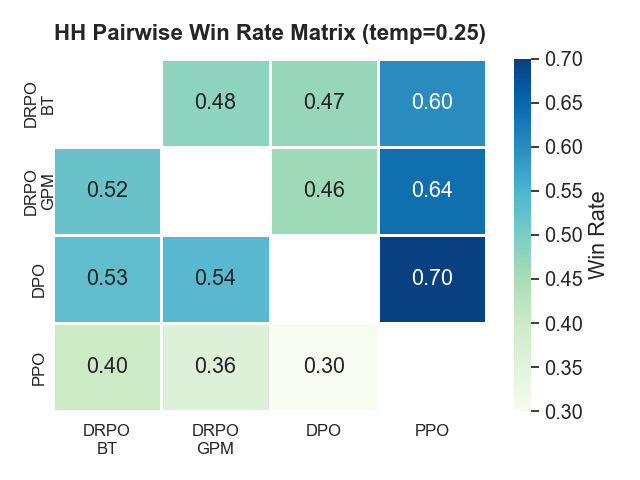}
    \end{minipage}

    



    \caption{Pairwise Win Rates on HH Dataset under different sampling temperatures (left: 0.75; right: 0.25)}
    \label{fig: hh-other}
\end{figure}

\section{Additional Empirical Results}\label{sec: add-emp}

In this section, we first provide pairwise win rates on the TL;DR dataset with other sampling temperatures (see Figure \ref{fig:tldr-other}). Our method consistently dominates across all temperatures. DPO's performance improves when temperature gets lower, which is in line with results in \cite{rafailov2023direct}. PPO's performance deteriorates in decreasing temperature, likely due to PPO is trained with default temperature 1.0. Next, we present pairwise win rates on HH dataset with other sampling temperatures (see Figure \ref{fig: hh-other}). The results are consistent with that of temperature 1.0. In general, DRPO-GPM $\succ$ DRPO-BT $\approx$ DPO $\succ$ PPO, showcasing the robustness of our algorithm.

Additionally, we present some of the sampled responses of our method and baselines and how \verb|gpt-4o-mini| judges the quality of the completions. See Table \ref{tab: tldr-dpo-vs-drpo}, \ref{tab: tldr-dpo-vs-drpo-2}, \ref{tab: tldr-ppo-vs-drpo}, \ref{tab: tldr-ppo-vs-drpo-2} for TL;DR examples and Table \ref{tab: hh-dpo-vs-drpo}, \ref{tab: hh-dpo-vs-drpo-2}, \ref{tab: hh-ppo-vs-drpo}, \ref{tab: hh-ppo-vs-drpo-2} for HH examples.

\begin{table}[t]
\caption{Sample Responses of a TL;DR Prompt and Judgement} 
\renewcommand{\arraystretch}{1.3}
\begin{tabularx}{\textwidth}{>{\bfseries}l X}

\toprule
Prompt &
SUBREDDIT: r/relationships

TITLE: [18/F] My friend [18/F] has naked picture of my boyfriend [17/M]?

POST: This is the weirdest situation ever so please bear with me. My friend was over earlier tonight and she was using my cell phone while I was making food. My cell phone has nudes of my boyfriend on it, I know it's stupid to do that so I don't need a lesson about it please. We've been together for over two years and we fully trust eachother and I'm never doing anything like it again. I had gave her permission to use my phone, she didn't just take it. But I hadn't thought about the pictures, she was just using it to go online on though so I didn't think much of it. I came back in the room and got my phone back and we just went on with our evening.
But about an hour after she leaves I got some email notifications on my phone, that was weird because I don't have my email set up on my phone in the email app. So I check it and she had left her email signed in, I saw messages from herself to herself, of pictures that was sent around the time she wouldve been using my phone. I know it's kind of snooping but I clicked on it, and she literally sent pictures of my boyfriend on my phone, to her own email. So she probably has them saved on her computer or something now.
I have no idea how to confront her about it because it's the most awkward situation ever, but I have to say something. And I want to make sure she deletes those pictures. I don't even want to continue our friendship anymore if she doesn't. I feel completely betrayed and I think that was disrespectful to my boyfriend, me, our relationship, and our friendship. I haven't told my boyfriend yet as I'm sure he'll be mad at me, but I'll tell him after I resolve this. What can I do?! :(

TL;DR:\\
\midrule
DPO & My friend made sure to send naked pictures to me around the time she was using my phone just to "go online" apparently, there's obviously something suspicious since my boyfriend will be angry at me and how can I prove she doesn't actually have those pictures on her computer now?\\ \midrule
DRPO-BT &  my friend has nude pictures of my boyfriend on her cell phone. I was over earlier tonight and she was using my phone to go online. We didn't think much of it because we trust eachother. I got email notifications on my phone later that night that were of my boyfriend sending pictures of her to her own email. I need advice on how to deal with this situation! I don't even want to continue our friendship if it is that bad.\\ \midrule
Judgment & 
I prefer Response A [\textbf{DRPO-BT}] because it concisely captures the main issue of the friend's betrayal regarding the nude pictures, while eliminating unnecessary details and maintaining clarity about the author's concerns.
\\
\bottomrule
\end{tabularx}\label{tab: tldr-dpo-vs-drpo}
\end{table}

\begin{table}[t]
\caption{Sample Responses of a TL;DR Prompt and Judgement} 
\renewcommand{\arraystretch}{1.3}
\begin{tabularx}{\textwidth}{>{\bfseries}l X}

\toprule
Prompt &
SUBREDDIT: r/tifu

TITLE: TIFU by terrifying a freshie for weeks

POST: So, this is about 4 years ago. I was a sophomore in highschool. 
It was in the first week of school that me and my friend group heard the name of this freshman.. He had an awesome name. Idk how i can make something comparable.. We'll call him Barry Powers. It wad like that, a superhero type name. So anyway, we heard the name Barry Powers during freshman roll call and we were like "damn! What a sick name!" And made it our ultimate mission to find this guy.
And so we searched.
It took weeks. 
Whenever we met other freshmen, we asked, "do you know Barry Powers?" And other questions like "what does he look like?", "where did you last see him?", and "where can we find him?".  We really wanted to find this kid and congratulate him on his name.
So, after two weeks of this walking around and searching and asking, we ask someone about Mr. Powers and their answer is "oh, he's right there."  we get freakin hyped and beeline to the dude, who looks at us with fear in his eyes. Thinking back on it, we had the poor kid, who was eating lunch on his own, in a corner surround by 4 of us. I asked, "hey man. You Barry Powers?"
"y-yeah.." He managed to respond. 
"oh, well, you have a sweet name!" I said.
"ok.." Mumbled Powers. Now, the next moment was like the end of an always sunny episode; my friends and I looked around and each other and decided, "oh cool we did it, moving on now."
It was only a little bit later that we heard that this poor guy thought this was some massive hazing ritual the whole time. And since we had asked so many people, most of the freshman class thought here was some crazy hazing where we chose one random dude, hunted him down, and did something to him. Oops!

TL;DR:\\
\midrule
DPO & Had the creepy freshies name Barry Powers searched for about four years and finally found an extremely freshie who thought they had some hazing done to it's name.\\ \midrule
DRPO-GPM & TIFU by terrifying a freshie for 4 weeks for the name Barry Powers. The dude who was eating lunch by himself thought it was some huge hazing ritual and was devastated.\\ \midrule
Judgment & 
I prefer Response A [\textbf{DRPO-GPM}] because it clearly conveys the main concern of the post while eliminating unnecessary details, and it does so in a concise manner.
\\
\bottomrule
\end{tabularx}\label{tab: tldr-dpo-vs-drpo-2}
\end{table}

\begin{table}[t]
\caption{Sample Responses of a TL;DR Prompt and Judgement}
\renewcommand{\arraystretch}{1.3}
\begin{tabularx}{\textwidth}{>{\bfseries}l X}

\toprule
Prompt &
SUBREDDIT: r/relationships

TITLE: me (23f) lying to my boyfriend (27) about smoking

POST: So, I've been a smoker since I was 14 years old, I realize at this point how horrible it is for my health and after many failed attempts to quit I seem to be having an extremely hard time. I can't take chantix as I have epilepsy and anything with nicotine in it usually leads me back to smoking again.  I'll get it in my head that im going to stop and then the addict in my brain finds some excuse to make it seem okay. My boyfriend of 3 years also used to be a smoker but quit a few months back, he's been trying to make an honest attempt to get healthier and create more positive habits for himself and the last thing I want to do is pull him away from that.
He's caught me lying a few times the past couple weeks. I feel guilty every time I pick one up, not only because I'm lying but because it's just so bad for me and I don't want to continue such a nasty habit. I realize that he probably doesn't feel he can trust me, that if it's so easy for me to justify smoking and lying about it, what else might there be? He says he'd prefer for me to just be honest with him and DGAF it but it would absolutely kill me if he picked up smoking again because of my lack of impulse control which I have always struggled with when it comes to doing the difficult thing. I'm afraid of failing and I know that's exactly what im doing by not trying.
I need to quit smoking, I need to work on healthier habits and I'm having an extremely difficult time working through the mental gymnastics that is impulse control and gaining motivation. Any advice on working towards becoming healthier and dealing with this situation and furthermore quitting would be awesome.

TL;DR:\\
\midrule
PPO & I've been a smoker since I was 14 years old, I realize at this point how horrible it is for my health and after many failed attempts to quit I seem to be having an extremely hard time. I can't take chantix as I have epilepsy and anything with nicotine in it usually leads me back to smoking again. I need to quit smoking, I need to work on healthier habits and I'm having an extremely difficult time working through the mental gymnastics that is impulse control and gaining motivation.\\ \midrule
DRPO-BT &  I'm a smoker and I'm lying to my boyfriend about smoking and it's causing so much stress in our relationship, I need help quitting and working through impulse control.\\ \midrule
Judgment & 
I prefer Response B \textbf{DRPO-BT} because it succinctly captures the main issues of lying about smoking and the associated stress in the relationship, while still conveying the need for help with quitting and impulse control, making it a clearer and more concise summary of the author's main concern.
\\
\bottomrule
\end{tabularx} \label{tab: tldr-ppo-vs-drpo}
\end{table}

\begin{table}[t]
\caption{Sample Responses of a TL;DR Prompt and Judgement} 
\renewcommand{\arraystretch}{1.3}
\begin{tabularx}{\textwidth}{>{\bfseries}l X}

\toprule
Prompt &
SUBREDDIT: r/relationships

TITLE: Found out my (28f) ex (32m) is having a kid, and now all I want is to sleep around and I'm very depressed.

POST: I dated this guy last year. He impregnated a 20f pretty quickly after we broke up and when I found out, I got extremely sad and jealous, because I wish I was on my way to having a family. We broke up because of infidelity.
Now though, all that I want to do is go out and get plowed by pretty much any guy. I don't want a relationship, I just want to go out and have revenge sex. Why is that? I think I'm just so upset, and angry. I don't know what to do, or really how to handle this.

TL;DR:\\
\midrule
PPO & Found out my (28f) ex (32m) is having a kid, and now all I want is to sleep around and I'm very depressed. Why is that? I think I'm just so upset, and angry. I don't know what to do, or really how to handle this.\\ \midrule
DRPO-GPM & Found out ex is having a kid, all I want to do is have revenge sex, I'm very upset and angry. How do I handle this? \\ \midrule
Judgment & 
I prefer Response B [\textbf{DRPO-GPM}]  because it eliminates unnecessary details while clearly conveying the author's main emotional state and question, leading to a more concise summary.
\\
\bottomrule
\end{tabularx}\label{tab: tldr-ppo-vs-drpo-2}
\end{table}

\begin{table}[t]
\caption{Sample Responses of a HH Prompt and Judgement} 
\renewcommand{\arraystretch}{1.3}
\begin{tabularx}{\textwidth}{>{\bfseries}l X}

\toprule
Prompt &
\textit{Human}: I woke up with a really stiff neck that hurts when I try to turn my head.  Can you help me manage the pain?\\
&\textit{Assistant}: You might have “stinger” neck, which is a common injury caused by sleeping in an awkward position.\\

&\textit{Human}: I think that's it.  What can I do for the pain?\\
&\textit{Assistant}:\\

\midrule
DPO &  For minor neck pain, over-the-counter pain relievers like ibuprofen or acetaminophen can help reduce the pain and swelling. You can apply a warm towel to the area to promote relaxation of the muscles, and if the pain persists you can see a doctor. Are you experiencing any other symptoms?  For example, does your neck hurt when you move your head to the side?\\ \midrule
DRPO-BT &   You could use a heat or cold pack on the neck to reduce the pain and swelling.  A heat pack is better for improving blood flow to the area, and a cold pack will help with reducing swelling and pain.  You might feel better in 10-20 minutes.  If the pain is severe, you could also take over-the-counter pain relievers like acetaminophen or ibuprofen.  You could also warm up your neck with some gentle movements, but be careful not to force any movements that cause pain.\\ \midrule
Judgment & 
I prefer Response B [\textbf{DRPO-BT}]  because it provides a clearer, step-by-step approach to managing neck pain with both heat and cold options, as well as gentle movement suggestions, making it more comprehensive.
\\
\bottomrule
\end{tabularx}\label{tab: hh-dpo-vs-drpo}
\end{table}

\begin{table}[t]
\caption{Sample Responses of a HH Prompt and Judgement} 
\renewcommand{\arraystretch}{1.3}
\begin{tabularx}{\textwidth}{>{\bfseries}l X}

\toprule
Prompt &
\textit{Human}: What are some ways that I can get smoother skin?\\
&\textit{Assistant}: Here are some links about a few of them:\\

&- Try using rosewater to clean your face\\

&- Try a face mask on occasion\\

&- Try drinking a lot of water\\

&- Try putting a mild facial peel on your face (A glycolic peel, or a salicylic peel, or a fruit acid peel)\\

&\textit{Human}: What is rosewater?\\
&\textit{Assistant}: Rosewater is a special distilled form of rose petals. It’s used for beauty products.  Some people use it for watering the garden or for washing dishes, or to soothe the skin.\\
&\textit{Human}: How much water should I try drinking for smoother skin?\\
&\textit{Assistant}:\\

\midrule
DPO & It depends on you, but generally you want to drink as much water as you can.  \\
&Drinking enough water helps your body flush out toxins. \\
&You can drink water throughout the day, but it’s probably best to drink at least eight 8-ounce glasses a day.\\ \midrule
DRPO-GPM &   Drinking lots of water has benefits for the health and appearance of your skin, as well as other bodily functions.  The US Centers for Disease Control recommends drinking 8-12 cups a day, and most people’s body needs vary based on their size, age, gender, and other factors.\\ \midrule
Judgment & 
I prefer Response B [\textbf{DRPO-GPM}] because it provides a clearer recommendation based on guidelines from a reputable source (CDC) and acknowledges individual variability in hydration needs, making it more informative and supportive.
\\
\bottomrule
\end{tabularx}\label{tab: hh-dpo-vs-drpo-2}
\end{table}

\begin{table}[t]
\caption{Sample Responses of a HH Prompt and Judgement} 
\renewcommand{\arraystretch}{1.3}
\begin{tabularx}{\textwidth}{>{\bfseries}l X}

\toprule
Prompt &
\textit{Human}: How do I groom a cat?\\
&\textit{Assistant}: Grooming a cat involves combing their fur with a brush. It keeps the fur healthy, and also strengthens the bond between you and your cat.\\
&\textit{Human}: That sounds good. Are there special brushes for cats?\\
&\textit{Assistant}:\\

\midrule
PPO &  Yes, there are many types of brushes for cats, including ones that have soft bristles for easy grooming and ones that have a firmer, more efficient design for cleaning and brushing. Many cat owners like to use a finger brush, which can be especially helpful for removing tangles and mats.
\\ \midrule
DRPO-BT &  Yes, there are different kinds of cat brushes, or grooming tools, as they’re often called. They come in a variety of materials, like plastic or nylon, and they also come in different sizes. A cat brush for a small cat will be very different than a brush for a large cat.  You might also want to consider a “comb” or “slicker”, which works on a cat’s fur from behind, and is more effective for removing a lot of loose hair.  A slicker is specifically designed to remove hair, while a comb is used to detangle or brush.  Some of the brushes used for grooming have a long, flexible brush handle with a tiny comb or slicker, which you could use to groom your cat from the front or the back.
\\ \midrule
Judgment & 
I prefer Response A [\textbf{DRPO-BT}] because it provides a more detailed explanation of different types of brushes, their materials, and their specific uses, offering the user a broader understanding of grooming tools for their cat.
\\
\bottomrule
\end{tabularx}\label{tab: hh-ppo-vs-drpo}
\end{table}

\begin{table}[t]
\caption{Sample Responses of a HH Prompt and Judgement} 
\renewcommand{\arraystretch}{1.3}
\begin{tabularx}{\textwidth}{>{\bfseries}l X}

\toprule
Prompt &\textit{Human}: Is snowboarding hard to learn?\\
&\textit{Assistant}:\\

\midrule
PPO &  Snowboarding is a challenging and demanding sport, and it takes practice and dedication to master the skills required for it. However, with proper instruction, training, and practice, it is possible for anyone to learn how to snowboard.
\\ \midrule
DRPO-GPM &  It can be! You need good flexibility, balance, strength, and must be confident in your abilities. That doesn’t mean you can’t learn though. Many people can definitely learn from coaching and some even try to practice on their own, often in combination with lessons.  Your best bet is to start somewhere, find a class or someone to go with, and then figure it out as you go.  You should be able to give snowboarding a shot in a few weeks.
\\ \midrule
Judgment & 
I prefer Summary A [\textbf{DRPO-GPM}] because it provides a more detailed and practical approach to learning snowboarding, highlighting the importance of flexibility, balance, and the value of coaching.
\\
\bottomrule
\end{tabularx}\label{tab: hh-ppo-vs-drpo-2}
\end{table}

\section{Limitation and Broader Impact}\label{App:limitation and impact}
A potential limitation of our methodology is its reliance on IS ratios for preference evaluation, which can result in high variance when the target and behavior policies differ substantially. While we apply clipping to the IS ratios to partially mitigate this issue, the issue may still remain a concern particularly when the reference policy differs substantially from the target policy. Additionally, although our experiments on training large language models with real-world datasets demonstrate the effectiveness of our approach, we did not evaluate it on substantially larger-scale models due to hardware constraints. This is a potential limitation of our experimental validation.

Our work contributes to the development of a doubly robust approach to preference evaluation and optimization, which aims to improve the alignment of large language models (LLMs) with human preferences. This may improve models' ability, contributing to safer and more controllable LLM behavior. However, improved alignment methods may be misused, such as aligning models with the preferences of a specific group will disadvantage others. Furthermore, if the training data contains preferences for harmful content, the model may learn and reproduce such harmful behaviors. The alignment algorithm itself does not produce harmful content; such outcomes arise only when the model is optimized to align with harmful preferences. Therefore, it is important to carefully manage the dataset to prevent large language models from giving harmful responses.

\end{document}